\NewDocumentCommand{\entropy}{om}{\mathbb{H}\left[#2
    \IfValueT{#1}{\,\middle|\,#1}\right]}
\NewDocumentCommand{\bentropy}{lm}
  {\widetilde{\mathbb{H}}#1\left[#2\right]}
\NewDocumentCommand{\mutualInfo}{omm}{\mathbb{I}\left[#2;#3
    \IfValueT{#1}{\,\middle|\,#1}\right]}
\newtheorem{theorem}{Theorem}
\newtheorem{lemma}{Lemma}
\newtheorem{remark}{Remark}
\newtheorem{corollary}{Corollary}
\DeclareMathOperator*{\argmin}{arg min}
\DeclareMathOperator*{\argmax}{arg max}
 \newcommand*{\transpose}[1]{#1\sp{\intercal}}
\begin{document}
%
\title{Sequential Sensing with Model Mismatch}


\author{Ruiyang~Song,~
       Yao~Xie,~
        and~Sebastian~Pokutta~
        \thanks{Ruiyang~Song (songry12@mails.tsinghua.edu.cn) is with the Dept. of Electronic Engineering, Tsinghua University, Beijing, China. }
\thanks{Yao~Xie (yao.xie@isye.gatech.edu) and Sebastian~Pokutta  (sebastian.pokutta@isye.gatech.edu) are H. Milton Stewart School of Industrial and Systems Engineering, 
Georgia Institute of Technology, Atlanta, GA.}
\thanks{This work is partially supported by NSF grant CMMI-1300144
and CCF-1442635. }
}

\maketitle

\begin{abstract}
We characterize the performance of sequential information guided sensing, Info-Greedy Sensing \cite{InfoGreedy2014}, when there is a mismatch between the true signal model and the assumed  model, which may be a sample estimate. In particular, we consider a setup where the signal is low-rank Gaussian and the measurements are taken in the directions of eigenvectors of the covariance matrix $\Sigma$ in a decreasing order of eigenvalues. 
We establish a set of performance bounds when a mismatched covariance matrix $\widehat{\Sigma}$ is used, in terms of the gap of signal posterior entropy, as well as the additional amount of power required to achieve the same signal recovery precision. Based on this, we further study how to choose an initialization for Info-Greedy Sensing using the sample covariance matrix, or using an efficient covariance sketching scheme. 

\end{abstract}

\begin{IEEEkeywords}
compressed sensing, information theory, sequential methods, high-dimensional statistics, sketching algorithms\end{IEEEkeywords}

%
\IEEEpeerreviewmaketitle

\IEEEPARstart{}{} 

\section{Introduction}

Sequential compressed sensing is a promising new information acquisition and recovery technique to process big data that arise in various applications such as compressive imaging \cite{NeifeldTaskSpecific2008, KeAshok2010, NeifeldCSImaging2011}, 
 power network monitoring
  \cite{WirelessHouseElectricity2014}, and large scale sensor networks
  \cite{sparseSensorLocal2011}. The sequential nature of the problems arises either because the measurements are taken one after another, or due to the fact that the data is obtained in a streaming fashion so that it has to be processed in one pass. 
  
To harvest the benefits of adaptivity in sequential compressed sensing,  various
algorithms have been developed (see \cite{InfoGreedy2014} for a  review.) We may classify these algorithms as 
(1)
being agnostic about the signal distribution and, hence, using random measurements 
\cite{HauptAdaptiveCS2009, DavenportArias-Castro2012,TajerPoor2012,MalioutovSanghaviWillsky2010,HauptSeqCS2012,JainSoniHaupt2013,MalloyNowak2013}; 
(2) exploiting additional structure of the signal (such as graphical structure
\cite{KrishnamurthySingh13} and tree-sparse structure \cite{ErvinCastro2013, AkshayHaupt2014}) to design measurements; 
(3) exploiting the distributional information of the signal in choosing the measurements possibly through maximizing mutual information:  the seminal Bayesian compressive
  sensing work \cite{BayesianCS2008}, Gaussian mixture models (GMM) \cite{TaskDrivenDuarte2013, CarsonChenRodrigues2012} and our earlier work \cite{InfoGreedy2014} which presents a general framework for information guided sensing referred to as Info-Greedy Sensing.

In this paper we consider the setup of Info-Greedy Sensing\cite{InfoGreedy2014}, as it provides certain optimality guarantees. Info-Greedy Sensing aims at designing subsequent measurements to maximize the mutual information conditioned on previous measurements. 
Conditional mutual information is a natural metric here, as it captures exclusively useful new information between the signal and the result of the measurement disregarding noise and what has already been learned from previous measurements. 
It was shown in  \cite{InfoGreedy2014} that Info-Greedy Sensing for a Gaussian signal is equivalent to choosing the sequential measurement vectors $a_1, a_2, \ldots$ as the orthonormal eigenvectors of $\Sigma$ in a decreasing order of eigenvalues.

In practice, we do not know the signal covariance matrix $\Sigma$ and have to use a sample covariance matrix $\widehat{\Sigma}$ as an estimate. As a consequence, the measurement vectors are calculated from $\widehat{\Sigma}$, which deviate from the optimal directions. Since we almost always have to use some estimate for the signal covariance, it is important to quantify the performance of sensing algorithms with model mismatch. 

In this paper, we characterize the performance of Info-Greedy Sensing for Gaussian signals \cite{InfoGreedy2014} when the true signal covariance matrix is replaced with a proxy, which may be an estimate from direct samples or using a covariance sketching scheme. We establish a set of theoretical results including (1) relating the error in the covariance matrix $\|\Sigma -\widehat{\Sigma}\|$ to the entropy of the signal posterior distribution after each sequential measurement, and thus characterizing the gap between this entropy and the entropy when the correct covariance matrix is used; (2) establishing an upper bound on the amount of additional power required to achieve the same precision of the recovered signal if using an estimated covariance matrix; (3) if initializing Info-Greedy Sensing via a sample covariance matrix, finding the minimum number of samples required so that using such an initialization can achieve good performance; (4) presenting a covariance sketching scheme to initialize Info-Greedy Sensing and find the conditions so that using such an initialization is sufficient.  We also present a numerical example to demonstrate the good performance of Info-Greedy Sensing compared to a batch method (where measurements are not adaptive) when there is mismatch.

Our notations are standard. 
Denote $[n] \triangleq \{1,2,\ldots,n\}$; $\|X\|$ is the spectral norm of a matrix $X$, $\|X\|_F$ denotes the Frobenius norm of a matrix $X$, and $\|X\|_{*}$ represents the nuclear norm of a matrix $X$; $\|x\|$ is the $\ell_2$ norm of a vector $x$, and $\|x\|_1$ is the $\ell_1$ norm of a vector $x$; let $\chi_n^2$ be the quantile function of the chi-squared distribution with $n$ degrees of freedom; let $\mathbb{E}[x]$ and $\mbox{Var}[x]$ denote the mean and the variance of a random variable $x$; $X \succeq 0$ means that the matrix $X$ is positive semi-definite.

\section{Problem setup}

A typical sequential compressed sensing setup is as follows. Let $x \in \mathbb{R}^n$ be an unknown $n$-dimensional signal. We make $K$ measurements of $x$ sequentially
\[
y_k = \transpose a_k x + w_k, \quad k = 1, \ldots, K,
\]
and the power of the measurement is $\|a_k\|^2 = \beta_k$. The goal is to recover $x$ using measurements $\{y_k\}_{k=1}^K$. 
Consider a Gaussian signal $x \sim \mathcal{N}(0, \Sigma)$ with known zero mean and covariance matrix $\Sigma$ (here without loss of generality we have assumed the signal has zero mean). Assume the rank of $\Sigma$ is $s$ and the signal can be low-rank $s\ll n$. 
Info-Greedy Sensing \cite{InfoGreedy2014} chooses each measurement to maximizes the conditional mutual information 
\[
a_{k} \leftarrow \argmax_{a}
  \mutualInfo[y_{j}, a_{j}, j < k]{x}
  {\transpose{a} x + w}/\beta_k.
\]
The goal is to use minimum number of measurements (or total power) so that the estimated signal is recovered with precision $\varepsilon$: $\|\hat{x} - x\| < \varepsilon$ with high probabilities.  

In \cite{InfoGreedy2014}, we have devised a solution to the above problem, and established that Info-Greedy Sensing for low-rank Gaussian signal is to measure in the directions of the eigenvectors of $\Sigma$ in a decreasing order of eigenvalues with power allocation depending on the noise variance, signal recovery precision $\varepsilon$ and confidence level $p$, as given  in Algorithm \ref{alg:Gaussian-all}.  

Ideally, if we know the true signal covariance we will use the corresponding eigenvector to form measurements. However, in practice, we have to use an estimate of the covariance matrix which usually has errors. To establish performance bound when there is a mismatch between the assumed and the true covariance matrix, we adopt a metric which is the posterior entropy of the signal conditioned on previous measurement outcomes.  The entropy of a Gaussian signal $x \sim\mathcal{N}(\mu, \Sigma)$ is given by
  \[
  \entropy{x} = \frac{1}{2}\ln \left((2\pi e)^n \det(\Sigma)\right).
  \]
Hence, the conditional mutual information is essentially the log of the determinant of the conditional covariance matrix, or equivalently the log of the volume of the ellipsoid defined by the covariance matrix. Here, to accommodate the scenario where the covariance matrix is low-rank, we consider a modified definition for conditional entropy, which is the log of the volume of the ellipsoid on the low-dimensional space. Let $\Sigma_k$ be the underlying true signal covariance conditioned on the previous $k$ measurements; denote by $\widehat{\Sigma}_k$ the observed covariance matrix, which is also the output of the sequential algorithm. Assume the rank of $\Sigma$ is $s$. Then the metric we use to track the progress of our algorithm is
\[
\entropy[y_{j}, a_{j}, j \leq k]{x}
= \ln ((2\pi e)^{s/2} {\textsf{Vol}}(\Sigma_k)),
\] 
where ${\textsf{Vol}}(\Sigma_k)$ is the volume of the ellipse defined by the covariance matrix $\Sigma_k$, which is equal to the product of its non-zero eigenvalues.

\begin{figure}
\begin{center}
\includegraphics[width = 0.7\linewidth]{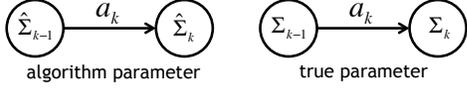}
\caption{Parameter update in the algorithm and for the true distribution.}
\label{evol}
\end{center}\vspace{-0.2in}

\end{figure}

\begin{algorithm}[h!]
  \caption{Info-Greedy Sensing for Gaussian signals}
  \begin{algorithmic}[1]
    \REQUIRE assumed signal mean \(\theta\) and covariance matrix \(\Gamma\), noise variance \(\sigma^2\), recovery accuracy \(\varepsilon\),
      confidence level \(p\)
    \REPEAT 
                         \STATE \(\lambda \leftarrow \Vert{\Gamma}\Vert\)
                          \STATE \(\beta \leftarrow (\chi_n^2(p)/\varepsilon^2-1/\lambda)\sigma^2\)

        \COMMENT{largest eigenvalue}
       \STATE \(u \leftarrow\)
        normalized eigenvector of \(\Gamma\)
        for eigenvalue \(\lambda\)
      
      \STATE form measurement: $a = \sqrt{\beta} u$
        \STATE measure: \(y = \transpose a x + w\)

     \STATE update mean: \(\theta \leftarrow \theta
        + \Gamma a
        (y - \transpose a\theta )/(\lambda +\sigma^2)\)
        
\STATE  update covariance: \(\Gamma \leftarrow \Gamma
        - \Gamma a
        \transpose a\Gamma/(\lambda + \sigma^2) \) 
        
          \UNTIL{\(\Vert{\Gamma}\Vert \leq \varepsilon^{2} / \chi_{n}^{2}(p)\)}
    \COMMENT{all eigenvalues become small}
    \RETURN posterior mean \(\theta\) as a signal estimate $\hat{x}$
  \end{algorithmic}
  \label{alg:Gaussian-all}
\end{algorithm}

\section{Performance bounds}

We analyze the performance of Info-Greedy Sensing, when the assumed covariance matrix is used for measurement design, $\widehat{\Sigma}$, which is different from the true signal covariance matrix $\Sigma$, i.e. $\widehat{\Sigma}$ is used to initialize Algorithm \ref{alg:Gaussian-all}.
Let the eigenpairs of $\Sigma$ with the eigenvalues (which can be zero) ranked from the largest to the smallest to be $(\lambda_1, u_1), (\lambda_2, u_2), \ldots, (\lambda_n, u_n)$, and let the eigenpairs of $\widehat{\Sigma}$  with the eigenvalues  (which can be zero) ranked from the largest to the smallest to be $(\hat{\lambda}_1, \hat{u}_1), (\hat{\lambda}_2, \hat{u}_2), \ldots, (\hat{\lambda}_n, \hat{u}_n)$. Let the updated covariance matrix in Algorithm \ref{alg:Gaussian-all} starting from $\widehat{\Sigma}$ after $k$ measurements using $\{a_i\}_{i=1}^k$  be $\widehat{\Sigma}_k$,  and the true conditional covariance matrix of the signal after these measurements be $\Sigma_k$. The evolution of the covariance matrices in Algorithm \ref{alg:Gaussian-all} is illustrated in Fig. \ref{evol}. 
Hence, by this notation, since each time we measure in direction of the dominating eigenvector of the updated covariance matrix, we have that $(\hat{\lambda}_k, \hat{u}_k)$ is the largest eigenpair of $\widehat{\Sigma}_{k-1}$, and that  $(\lambda_k, u_k)$ is the largest eigenpair of $\Sigma_{k-1}$.
Furthermore, denote the difference between the true and the assumed conditional covariance matrices after we obtain $k$ measurements 
\[E_k=\widehat{\Sigma}_k-\Sigma_k,\]
and let 
\[\delta_k=\Vert E_k\Vert.\] 
Assume the eigenvalues of $E_k$ are $e_1\geq e_2 \geq\cdots\geq e_n$. Then $\delta_k=\max\{\vert e_1 \vert, \vert e_n \vert\}$.

\subsection{Deterministic error}

The following theorem shows that when the error, $\|\widehat{\Sigma} - \Sigma\|$ is sufficiently small, the performance of Info-Greedy Sensing will not degrade much. Note that, however, if the power allocations $\beta_i$ are calculated using the eigenvalues of the assumed covariance matrix $\widehat{\Sigma}$, after $K = s$ iterations, we do not necessarily reach the desired precision $\varepsilon$ with probability $p$. 

\begin{theorem}\label{entropy}
Assume the power allocations $\beta_k = (\chi_n^2(p) /\varepsilon^2 - 1/\hat{\lambda}_k)\sigma^2$ are calculated using eigenvalues $\hat{\lambda}_k$ of $\widehat{\Sigma}$, the noise variance $\sigma^2$, recovery accuracy $\varepsilon$ and confidence level $p$ in Algorithm \ref{alg:Gaussian-all}.  Given the rank of the covariance matrix $\mbox{rank}(\Sigma) = s$, the number of total measurements is $K$, for some constant $0<\zeta<1$, if the error  satisfies 
\[
\|\Sigma - \widehat{\Sigma}\|\leq \frac{\zeta}{4^{K+1}}\frac{\varepsilon^2}{\chi_n^2(p)},\]
then 
\begin{equation}
\entropy[y_{j}, a_{j}, j \leq k]{x} \leq \frac{s}{2}
\left\{\ln [2\pi e {\rm tr}(\Sigma)]
- \sum_{j=1}^{k}\ln (1/f_j)
\right\}, \label{entropy_bound}
\end{equation}
where
\[
f_k=1-\frac{1-\zeta}{s}\frac{\beta_k\hat{\lambda}_k}{\beta_k\hat{\lambda}_{k}+\sigma^2} \in (0, 1),\quad k = 1, \ldots, K.
\]
\end{theorem}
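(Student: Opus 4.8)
The plan is to track the true conditional covariance $\Sigma_k$ through the algorithm and bound its volume at each step. Since we measure in the direction $a_k = \sqrt{\beta_k}\hat u_k$, the true Bayesian update gives
\[
\Sigma_k = \Sigma_{k-1} - \frac{\Sigma_{k-1} a_k \transpose{a_k}\Sigma_{k-1}}{\transpose{a_k}\Sigma_{k-1}a_k + \sigma^2}.
\]
The key quantity is $\transpose{a_k}\Sigma_{k-1}a_k = \beta_k \transpose{\hat u_k}\Sigma_{k-1}\hat u_k$, which I want to lower-bound by something close to $\beta_k\hat\lambda_k$. First I would establish, by an induction that propagates the error bound $\|\Sigma-\widehat\Sigma\|\le \zeta\varepsilon^2/(4^{K+1}\chi_n^2(p))$ through the two update formulas (one for $\Sigma_k$, one for $\widehat\Sigma_k$), that $\delta_k = \|E_k\| = \|\widehat\Sigma_k - \Sigma_k\|$ grows by at most a factor of roughly $4$ per step, so that $\delta_k \le 4^k \delta_0 \le \zeta\varepsilon^2/(4\chi_n^2(p))$ for all $k\le K$. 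The factor $4$ should come from bounding the perturbation of the rank-one update term: both the numerator and the denominator $\beta_k\hat\lambda_k+\sigma^2$ versus $\beta_k \transpose{\hat u_k}\Sigma_{k-1}\hat u_k+\sigma^2$ are controlled by $\delta_{k-1}$, and $\beta_k\hat\lambda_k/(\beta_k\hat\lambda_k+\sigma^2)<1$ keeps the constants bounded. This recursive error-growth estimate is the technical heart and the place where the somewhat crude $4^{K+1}$ comes from; I expect this to be the main obstacle — one has to be careful that the denominators never get too small and that $\hat u_k$ being an eigenvector of $\widehat\Sigma_{k-1}$ rather than $\Sigma_{k-1}$ only costs a $\delta_{k-1}$ error in the quadratic form.

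Next, with $\delta_{k-1}$ under control, I would show $\transpose{\hat u_k}\Sigma_{k-1}\hat u_k \ge \hat\lambda_k - \delta_{k-1} \ge \zeta\hat\lambda_k$ — more precisely, enough to guarantee
\[
\frac{\beta_k \transpose{\hat u_k}\Sigma_{k-1}\hat u_k}{\beta_k \transpose{\hat u_k}\Sigma_{k-1}\hat u_k+\sigma^2} \ge \frac{1-\zeta}{s}\cdot\frac{\beta_k\hat\lambda_k}{\beta_k\hat\lambda_k+\sigma^2}\cdot s = (1-\zeta)\frac{\beta_k\hat\lambda_k}{\beta_k\hat\lambda_k+\sigma^2},
\]
so that the reduction factor in that one eigen-direction is at most $f_k$. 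Here I would use that the rank-one subtraction reduces exactly one eigenvalue (the one along the current measurement direction, up to the perturbation) while leaving the product of the others essentially unchanged, so $\textsf{Vol}(\Sigma_k) \le f_k\,\textsf{Vol}(\Sigma_{k-1})$ — the $1/s$ inside $f_k$ is the slack that absorbs the fact that $\hat u_k$ is not exactly an eigenvector of $\Sigma_{k-1}$, spreading a small amount of variance across the remaining directions.

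Finally I would unroll the recursion: $\textsf{Vol}(\Sigma_k) \le \textsf{Vol}(\Sigma_0)\prod_{j=1}^k f_j \le (\mathrm{tr}(\Sigma)/s)^s \prod_{j=1}^k f_j$, using AM–GM (the product of the $s$ nonzero eigenvalues of $\Sigma=\Sigma_0$ is at most $(\mathrm{tr}(\Sigma)/s)^s$) to pass from the volume of $\Sigma_0$ to $\mathrm{tr}(\Sigma)$. Taking logarithms and using $\entropy[y_j,a_j,j\le k]{x} = \ln((2\pi e)^{s/2}\textsf{Vol}(\Sigma_k)) = \tfrac{s}{2}\ln(2\pi e) + \tfrac12\sum\ln(\text{eigenvalues})$ then yields
\[
\entropy[y_{j}, a_{j}, j \leq k]{x} \le \frac{s}{2}\Bigl\{\ln[2\pi e\,\mathrm{tr}(\Sigma)] - \sum_{j=1}^k \ln(1/f_j)\Bigr\},
\]
which is \eqref{entropy_bound}; the bound $f_k\in(0,1)$ is immediate since $0<\beta_k\hat\lambda_k/(\beta_k\hat\lambda_k+\sigma^2)<1$ and $0<(1-\zeta)/s<1$.
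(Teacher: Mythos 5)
Your error-propagation step ($\delta_k\le 4\delta_{k-1}$ under a smallness condition on $\delta_{k-1}$, hence $\delta_k\le 4^k\delta_0$) matches the paper's Lemma \ref{propdelta}, and your closing use of AM--GM is also an ingredient of the paper's proof. The gap is in the middle of the argument: you run the multiplicative recursion on the volume, claiming $\textsf{Vol}(\Sigma_k)\le f_k\,\textsf{Vol}(\Sigma_{k-1})$, and then read off a per-measurement entropy drop of $\tfrac{s}{2}\ln(1/f_k)$. These two statements are inconsistent by a factor of $s$: if the product of the nonzero eigenvalues shrinks by a factor $f_k$ per step, then $\entropy[y_j,a_j,j\le k]{x}=\ln((2\pi e)^{s/2}\textsf{Vol}(\Sigma_k))$ drops by only $\tfrac12\ln(1/f_k)$ (or $\ln(1/f_k)$, depending on whether $\textsf{Vol}$ denotes the product of nonzero eigenvalues or its square root), not $\tfrac{s}{2}\ln(1/f_k)$, so your unrolled bound falls short of \eqref{entropy_bound}. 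Your reading of the $1/s$ inside $f_k$ as ``slack for $\hat u_k$ not being an exact eigenvector of $\Sigma_{k-1}$'' is the tell that the mechanism is different: that slack is what the $(1-\zeta)$ factor absorbs; the $1/s$ has another origin.

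The paper instead uses $\mathrm{tr}(\Sigma_k)$ as the potential function. Lemma \ref{proptrace} gives an additive decrease of the trace of roughly $(1-\zeta)\frac{\beta_k\hat\lambda_k}{\beta_k\hat\lambda_k+\sigma^2}\lambda_k$, and since the largest eigenvalue $\lambda_k$ of $\Sigma_{k-1}$ satisfies $\lambda_k\ge\mathrm{tr}(\Sigma_{k-1})/s$ (rank preservation is handled by Lemma \ref{proprank}), this becomes the multiplicative bound $\mathrm{tr}(\Sigma_k)\le f_k\,\mathrm{tr}(\Sigma_{k-1})$ --- which is where the $1/s$ actually comes from. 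Only at the very end is the trace converted to a volume via Hadamard plus AM--GM, $\textsf{Vol}^2(\Sigma_k)\le(\mathrm{tr}(\Sigma_k)/s)^s$, which raises $\prod_j f_j$ to the power $s/2$ and produces exactly the $\tfrac{s}{2}\sum_j\ln(1/f_j)$ term in the statement. Your volume-based route could in principle be repaired: the exact per-step shrinkage of the product of nonzero eigenvalues is $\sigma^2/(\transpose{a_k}\Sigma_{k-1}a_k+\sigma^2)\le 1-(1-\zeta)\frac{\beta_k\hat\lambda_k}{\beta_k\hat\lambda_k+\sigma^2}\le f_k^{\,s}$ by Bernoulli's inequality, which would recover the correct power of $f_k$. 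But that step is essential and is absent from your sketch as written.
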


In the proof of Theorem \ref{entropy}, we use the trace of the underlying actual covariance matrix ${\rm tr}(\Sigma_k)$ as potential function, which serves as a surrogate for the product of eigenvalues that determines the entropy, since the calculation of the trace of the observed covariance matrix ${\rm tr} (\widehat{\Sigma}_{k})$ is much easier. Note that for an assumed covariance matrix $\Sigma$, after measuring in the direction of a unit norm eigenvector $u$ with eigenvalue \(\lambda\)  using power \(\beta\),
the updated matrix takes the form of
\begin{equation}
  \label{eq:covariance-eigenvector}
  \begin{split}
  &\Sigma - \Sigma \sqrt{\beta} u
  \left(
    \transpose{\sqrt{\beta} u} \Sigma \sqrt{\beta} u
    + \sigma^{2}
  \right)^{-1}
  \transpose{\sqrt{\beta} u} \Sigma \\
&  =
  \frac{\lambda \sigma^{2}}{\beta \lambda + \sigma^{2}}
  u \transpose{u}
  + \Sigma^{\perp u},
  \end{split} 
\end{equation}
where \(\Sigma^{\perp a}\) is the component of \(\Sigma\)
in the orthogonal complement of \(a\).
Thus, the only change in the eigen-decomposition of \(\Sigma\)
is the update of the eigenvalue of \(a\)
from \(\lambda\) to
\(\lambda \sigma^{2} / (\beta \lambda + \sigma^{2})\).
Based on the update above in (\ref{eq:covariance-eigenvector}), after one measurement, the trace of the covariance matrix that the algorithm keeps track of becomes 
\[{\rm tr}(\widehat{\Sigma}_{k})={\rm tr}(\widehat{\Sigma}_{k-1})-\frac{\beta_{k}\hat{\lambda}_{k}^2}{\beta_{k}\hat{\lambda}_{k}+\sigma^2}.\]

\begin{remark}
The upper bound of the posterior signal entropy in (\ref{entropy_bound}) shows that the amount of uncertainty reduction by the $k$th measurement is roughly $(s/2) \ln (1/f_k)$.
\end{remark}
\begin{remark}
Use the inequality that $\ln(1-x) \leq -x$ for $x\in (0, 1)$, we have that in (\ref{entropy_bound})
\begin{align*}
\entropy[y_{j}, a_{j}, j \leq k]{x}
&\leq  \frac{s}{2}\ln[2\pi e{\rm tr}(\Sigma)]-\frac{1-\zeta}{2}\sum_{j=1}^k \frac{\beta_k\hat{\lambda}_k}{\beta_{k}\hat{\lambda}_k+\sigma^2}\\
&=\frac{s}{2}\ln[2\pi e{\rm tr}(\Sigma)]-\frac{k(1-\zeta)}{2}\\
&~~~+\frac{(1-\zeta)\varepsilon^2}{2\chi_n^2(p)}\sum_{j=1}^k\frac{1}{\hat{\lambda}_j}.
\end{align*}
On the other hand, if the true covariance matrix is used, the posterior entropy of the signal is given by 
\begin{align}
\mathbb{H}_{\rm ideal}\left[x,\middle| y_{j}, a_{j}, j \leq k\right]
&=\frac{1}{2} \ln [(2\pi e)^s \prod_{i=1}^s \lambda_i ]
-\frac{\chi^2 _n(p)}{2\varepsilon^2}\sum_{j=1}^k \lambda_i
\label{entropy_mismatch}
\end{align}
where $\tilde{\beta}_j = (\chi_n^2(p)/\varepsilon^2-1/\lambda_j)\sigma^2$. Hence, we have
\begin{align}
&\entropy[y_{j}, a_{j}, j \leq k]{x} \leq \nonumber\\
& 
\mathbb{H}_{\rm ideal}\left[x,\middle| y_{j}, a_{j}, j \leq k\right]
+ \frac{s}{2} \ln\frac{{\rm tr}(\Sigma)}{\sqrt[s]{\prod_{j=1}^s\lambda_i}} \nonumber \\
& ~~~- \frac{1}{2} \sum_{j=1}^k [
\frac{\chi^2 _n(p)}{\varepsilon^2}\lambda_i
 + (1-\zeta)(1-\frac{\varepsilon^2}{\chi_n^2(p))}\frac{1}{\hat{\lambda}_j}
].
\label{entropy_mismatch_2}
\end{align}
This upper bound has a nice interpretation: it characterizes the amount of uncertainty reduction with each measurement. For example, when the number of measurements required when using the assumed covariance matrix versus using the true covariance matrix are the same, we have $\lambda_i \geq \varepsilon^2/\chi^2_n(p)$ and $\hat{\lambda}_i \geq \varepsilon^2/\chi^2_n(p)$. Hence, the third term in (\ref{entropy_mismatch_2}) is upper bounded by $- k/2$, which means that the amount of reduction in entropy is roughly 1/2 nat per measurement. 
\end{remark}

\begin{remark}
Consider the special case where the errors only occur in the eigenvalues of the matrix but not in the eigenspace $U$, i.e. 
\[\widehat{\Sigma} - \Sigma = U \mbox{diag}\{e_1, \cdots, e_s\} \transpose U\] 
and $\max_{1\leq i\leq s}{\vert e_i \vert}=\delta_0$, 
the upper bound in (\ref{entropy_mismatch}) can be further simplified. 
Suppose only the first $K (K\leq s)$ largest eigenvalues of $\widehat{\Sigma}$ are larger than the stopping criterion $\varepsilon^2/\chi_n^2(p)$ required by the precision, i.e., the algorithm takes $K$ steps in total. Then  
\begin{align*}
\entropy[y_{j}, a_{j}, j \leq k]{x}
&\leq \mathbb{H}_{\rm ideal}\left[x,\middle| y_{j}, a_{j}, j \leq k\right] \\
&~~+K\ln(1+\frac{\chi_n^2(p)}{\varepsilon^2}\delta_K)\\
&~~+\sum_{j=K+1}^{s} \ln(1+\frac{\delta_0+\delta_K}{\lambda_j}). 
\end{align*}
This characterizes the gap between the signal posterior entropy using the correct versus the incorrect covariance matrices after all measurements have been used. 

\end{remark}

If we allow more total power and use a different power allocation scheme than what is prescribed in Algorithm \ref{alg:Gaussian-all}, we are able to reach the desired precision $\varepsilon$. The following theorem establishes an upper bound on the amount of extra total power needed to reach the same precision $\varepsilon$ (than the total power $P_{\rm ideal}$ if using the correct covariance matrix).

\begin{theorem}\label{thm:power}
Given the recovery precision $\varepsilon$, confidence level $p$, rank of the true covariance matrix ${\rm rank}(\Sigma)=s$, assume $K\leq s$ eigenvalues of $\Sigma$ are larger than $\varepsilon^2/\chi_n^2(p)$. If 
\[\|\widehat{\Sigma} - \Sigma\|\leq \frac{1}{4^{s+1}}\frac{\varepsilon^2}{\chi_n^2(p)},\] 

then to reach a precision $\varepsilon$ at confidence level $p$, the total power $P_{\rm mismatch}$ required by Algorithm \ref{alg:Gaussian-all} when using $\widehat{\Sigma}$ is upper bounded by
\[P_{\rm mismatch} < P_{\rm ideal} + [\frac{20}{51}s+\frac{1}{272}K]\frac{\chi_n^2(p)}{\varepsilon^2}\sigma^2.\]

\end{theorem}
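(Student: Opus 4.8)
The plan is to keep the measurement directions $\hat u_1,\hat u_2,\ldots$ that Algorithm~\ref{alg:Gaussian-all} produces from $\widehat\Sigma$ but to replace its power schedule by one that drives each measured eigenvalue of the \emph{algorithm's} covariance a fixed safety margin below $\varepsilon^2/\chi_n^2(p)$. One then shows that the \emph{true} conditional covariance $\Sigma_{K'}$ at termination obeys $\|\Sigma_{K'}\|\le\varepsilon^2/\chi_n^2(p)$; by the same chi-squared tail argument that makes the stopping rule of Algorithm~\ref{alg:Gaussian-all} valid in the matched case, this certifies recovery accuracy $\varepsilon$ at confidence $p$. It then remains to add up the powers $\beta_k$ and compare with $P_{\rm ideal}=\sum_{i=1}^{K}(\chi_n^2(p)/\varepsilon^2-1/\lambda_i)\sigma^2$.

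Two ingredients drive everything. First, the horizon is short: by Weyl's inequality $\hat\lambda_{s+1}\le\lambda_{s+1}+\|\widehat\Sigma-\Sigma\|=\|\widehat\Sigma-\Sigma\|<\varepsilon^2/\chi_n^2(p)$, and since the update in (\ref{eq:covariance-eigenvector}) only decreases eigenvalues, the modified algorithm never has to touch more than $s$ directions, so $K'\le s$; this is why the hypothesis carries $4^{s+1}$ rather than the $4^{K+1}$ of Theorem~\ref{entropy}. Second, and this is the crux, one must track $\delta_k=\|\widehat\Sigma_k-\Sigma_k\|$. Because $\widehat\Sigma_{k-1}$ and $\Sigma_{k-1}$ are updated by the \emph{same} vector $a_k=\sqrt{\beta_k}\,\hat u_k$, one has $E_k=E_{k-1}-\bigl(F_{a_k}(\widehat\Sigma_{k-1})-F_{a_k}(\Sigma_{k-1})\bigr)$ with $F_a(M)=M-Maa^{\top}M/(a^{\top}Ma+\sigma^2)$; writing $\Sigma_{k-1}\hat u_k=\hat\lambda_k\hat u_k-E_{k-1}\hat u_k$ and expanding the rank-one correction of $\Sigma_{k-1}$ around that of $\widehat\Sigma_{k-1}$, then bounding the cross terms, yields a recursion $\delta_k\le 4\,\delta_{k-1}$, hence $\delta_k\le 4^k\delta_0\le\tfrac14\,\varepsilon^2/\chi_n^2(p)$ for all $k\le s$. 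The delicate point here — the expected main obstacle — is that $a_k$ is an eigenvector of $\widehat\Sigma_{k-1}$ but \emph{not} of $\Sigma_{k-1}$, so the rank-one correction applied to the true covariance is misaligned with its eigendirections; controlling the resulting cross terms cleanly is what fixes the base of the recursion (and this estimate is, in effect, already implicit in the proof of Theorem~\ref{entropy}).

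Given these, take $\beta_k=(1/\eta-1/\hat\lambda_k)\sigma^2$ with $\eta:=\tfrac34\,\varepsilon^2/\chi_n^2(p)$ and stop once $\|\widehat\Sigma_k\|\le\eta$; then $\|\Sigma_{K'}\|\le\|\widehat\Sigma_{K'}\|+\delta_{K'}\le\eta+\tfrac14\,\varepsilon^2/\chi_n^2(p)=\varepsilon^2/\chi_n^2(p)$, giving the required accuracy. For the power, $\beta_k\le\sigma^2/\eta=\tfrac43(\chi_n^2(p)/\varepsilon^2)\sigma^2$, and summing over the $K'\le s$ steps and comparing term by term with $P_{\rm ideal}$ — using $|\hat\lambda_k-\lambda_k|\le\delta_{k-1}$ to absorb $1/\lambda_k-1/\hat\lambda_k$ into an $O(4^k\delta_0(\chi_n^2(p)/\varepsilon^2)^2)$ error on the first $K$ shared directions, and noting that each of the at most $s-K$ remaining directions has $\hat\lambda_k$ just above $\varepsilon^2/\chi_n^2(p)$ and hence costs only about $\tfrac13(\chi_n^2(p)/\varepsilon^2)\sigma^2$ — produces a bound of the form $\bigl[c_1 s+c_2 K\bigr]\chi_n^2(p)\sigma^2/\varepsilon^2$; tightening the safety margin $\eta$ together with the case split pins the constants at $c_1=\tfrac{20}{51}$ and $c_2=\tfrac1{272}$, which is the secondary bookkeeping obstacle.
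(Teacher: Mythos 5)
Your proposal follows essentially the same route as the paper's proof: keep the directions from $\widehat{\Sigma}$, enlarge the power so each measured eigenvalue is driven a safety margin below $\varepsilon^2/\chi_n^2(p)$ (the paper uses the margin $\delta_s$, whose worst case is exactly your $\tfrac14\varepsilon^2/\chi_n^2(p)$), certify accuracy via Weyl's inequality applied to $E_s$, control $\delta_k\le 4\delta_{k-1}$ via the paper's Lemma \ref{propdelta}, and then do the same term-by-term power comparison, which indeed yields the constants $\tfrac{20}{51}$ and $\tfrac{1}{272}$. The argument is correct and matches the paper's.
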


\begin{remark}
In a special case when $K= s$ eigenvalues of $\Sigma$ are larger than $\varepsilon^2/\chi_n^2(p)$, 
then under the condition of Theorem \ref{thm:power}, we have a simpler expression for the upper bound
\begin{align*}
P_{\rm mismatch}
& < P_{\rm ideal} + \frac{323}{816}\frac{\chi_n^2(p)}{\varepsilon^2} \sigma^2 s.
\end{align*}
Note that the additional power required is only linear in $s$, which is quite small. All other parameters are independent of the input matrix.
\end{remark}

Also, note that when there is a mismatch in the assumed covariance matrix, better performance can be achieved if we make many low power measurements than making one full power measurement because we update the assumed covariance matrix in between. 

\subsection{Initialization with sample covariance matrix}

In practice, we usually use a sample covariance matrix for $\widehat{\Sigma}$. When the samples are Gaussian distributed, the sample covariance matrix follows a Wishart distribution. By finding the tail probability of the Wishart distribution, we are able to establish a lower bound on the number of samples to form the sample covariance matrix so that the conditions required by Theorem \ref{entropy} are met with high probability and, hence, Algorithm \ref{alg:Gaussian-all} has good performance with the assumed matrix $\widehat{\Sigma}$.

\begin{corollary}\label{cor:sampleSize}
Suppose the sample covariance matrix is obtained from training samples $\tilde{x}_1,\ldots,\tilde{x}_L$ that are drawn i.i.d. from  $\mathcal N(0,\Sigma)$:  $$\widehat{\Sigma}=\frac{1}{L}\sum_{i=1}^{L}\tilde{x}_i \tilde{x}_i^{\intercal}.$$
Let $\delta_0 = \|\widehat{\Sigma} - \Sigma\|$.
When
\[L\geq 4n^{1/2}{\rm tr}(\Sigma)(\frac{\Vert\Sigma\Vert}{\delta_0^2}+\frac{4}{\delta_0}),\]
we have $\Vert\widehat{\Sigma} -\Sigma\Vert \leq \delta_0$ with probability exceeding $1-2n\exp(-\sqrt{n})$.
\end{corollary}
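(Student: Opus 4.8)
The plan is to write the estimation error as a normalized sum of i.i.d.\ centered random matrices and apply a (sub-exponential) matrix Bernstein inequality. Put $Z_i \triangleq \tilde{x}_i\tilde{x}_i^{\intercal} - \Sigma$ for $i = 1,\ldots,L$; these are i.i.d., symmetric, mean zero, and $\widehat{\Sigma} - \Sigma = \frac{1}{L}\sum_{i=1}^L Z_i$, so it suffices to bound $\Pr\!\left(\bigl\|\frac{1}{L}\sum_{i=1}^L Z_i\bigr\| > \delta_0\right)$.

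First I would pin down the matrix variance. By the Gaussian fourth-moment (Isserlis) identity, $\mathbb{E}[\tilde{x}_i\tilde{x}_i^{\intercal}\tilde{x}_i\tilde{x}_i^{\intercal}] = 2\Sigma^2 + {\rm tr}(\Sigma)\Sigma$, so $\mathbb{E}[Z_i^2] = \Sigma^2 + {\rm tr}(\Sigma)\Sigma \preceq 2\,{\rm tr}(\Sigma)\|\Sigma\|\,I$ and hence $\bigl\|\sum_{i=1}^L\mathbb{E}[Z_i^2]\bigr\| \le 2L\,{\rm tr}(\Sigma)\|\Sigma\|$. The $Z_i$ are not uniformly bounded, but $\|Z_i\| \le \|\tilde{x}_i\|^2 + \|\Sigma\|$, and $\|\tilde{x}_i\|^2 = \sum_j \lambda_j g_j^2$ with $g_j$ i.i.d.\ standard normal is a weighted sum of chi-squares that concentrates at ${\rm tr}(\Sigma)$ with sub-exponential scale $O({\rm tr}(\Sigma))$. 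So either a sub-exponential/Bernstein-moment version of matrix Bernstein applies directly with scale parameter $R \asymp {\rm tr}(\Sigma)$, or one truncates on the event $\{\|\tilde{x}_i\|^2 \le c_0\,{\rm tr}(\Sigma)\text{ for all }i\}$ — whose complement has probability $\le n\,e^{-\sqrt n}$ by a $\chi^2$ tail bound — applies the bounded version to the truncated summands, and absorbs the small truncation bias.

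Either route yields, for an absolute constant $c>0$,
\[
\Pr\!\left(\|\widehat{\Sigma} - \Sigma\| > \delta_0\right)
\;\le\; 2n\exp\!\left(-c\,\min\!\left\{\frac{L\delta_0^2}{{\rm tr}(\Sigma)\|\Sigma\|},\ \frac{L\delta_0}{{\rm tr}(\Sigma)}\right\}\right).
\]
Under the hypothesis $L \ge 4 n^{1/2}\,{\rm tr}(\Sigma)\bigl(\|\Sigma\|/\delta_0^2 + 4/\delta_0\bigr)$ the first argument of the minimum is at least $4n^{1/2}$ and the second is at least $16n^{1/2}$; choosing the numerical constants $4$ and $4$ so as to absorb $c$ and the truncation loss makes the right-hand side at most $2n\,e^{-\sqrt n}$, which is the assertion.

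The main obstacle is the unboundedness of the summands. The clean route is to truncate $\|\tilde{x}_i\|^2$ at its mean-scale level $c_0\,{\rm tr}(\Sigma)$ — so that the uniform bound in the bounded matrix Bernstein inequality is $R \asymp {\rm tr}(\Sigma)$, which is exactly what produces the $n^{1/2}{\rm tr}(\Sigma)/\delta_0$ term — and then to verify that both the probability of the truncation event's complement and the bias it introduces are each dominated by the target failure probability $2n\,e^{-\sqrt n}$; a sub-exponential matrix Bernstein inequality applied directly works as well. Tracking the absolute constants through whichever inequality is used is what fixes the explicit constants in the stated sample-size bound; the dimension prefactor may be taken as $2n$, and in fact $n$ can be replaced by ${\rm rank}(\Sigma)=s$ since $\widehat{\Sigma}$ and $\Sigma$ both live on the $s$-dimensional range of $\Sigma$.
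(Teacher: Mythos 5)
Your proposal is correct in substance but takes a different, more self-contained route than the paper. The paper's proof is a two-line application of the cited Wishart tail bound (Lemma \ref{zhu2012}): since $L\widehat{\Sigma}\sim\mathcal W_n(L,\Sigma)$, plugging $t=\sqrt n$ into that lemma gives $\|\widehat{\Sigma}-\Sigma\|\le\bigl(\sqrt{2\sqrt n(\theta+1)/L}+2\theta\sqrt n/L\bigr)\|\Sigma\|$ with probability $1-2n e^{-\sqrt n}$, where $\theta={\rm tr}(\Sigma)/\|\Sigma\|$, and the hypothesis on $L$ is exactly what makes that right-hand side at most (a constant times) $\delta_0$. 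You instead re-derive the concentration from first principles: center the summands, compute the matrix variance via the Gaussian fourth-moment identity, and invoke a Bernstein-type matrix inequality with variance proxy $\asymp L\,{\rm tr}(\Sigma)\|\Sigma\|$ and scale $\asymp{\rm tr}(\Sigma)$. The resulting tail, $2n\exp\bigl(-c\min\{L\delta_0^2/({\rm tr}(\Sigma)\|\Sigma\|),\,L\delta_0/{\rm tr}(\Sigma)\}\bigr)$, is the same Bernstein-shaped bound the cited lemma encapsulates, so the two arguments are equivalent up to absolute constants (and, like the paper's own constants, yours close only up to such constants). What your route buys is transparency and generality (it extends beyond exactly Gaussian samples, and your observation that the dimension prefactor can be reduced to $s={\rm rank}(\Sigma)$ is a genuine sharpening); what the paper's route buys is brevity.

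One detail in your write-up does not survive scrutiny: the truncation sub-route as stated. For a general low-rank $\Sigma$ the event $\{\|\tilde x_i\|^2>c_0\,{\rm tr}(\Sigma)\}$ with an \emph{absolute} constant $c_0$ is not exponentially rare --- e.g.\ for rank-one $\Sigma$ one has $\|\tilde x\|^2={\rm tr}(\Sigma)\,g^2$ with $g$ standard normal, so this event has constant probability, not $\le n e^{-\sqrt n}$; moreover the union is over $L$ samples, and $L$ may greatly exceed $n$. To make truncation work one must truncate at a level growing with the target exponent (roughly ${\rm tr}(\Sigma)+\sqrt n\,\|\Sigma\|$ times logarithmic factors), which changes the scale parameter $R$. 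The clean version of your argument is therefore the other branch you mention: verify the sub-exponential Bernstein moment condition $\mathbb E[Z_i^p]\preceq\frac{p!}{2}\,(c\,{\rm tr}(\Sigma))^{p-2}\cdot c'\,{\rm tr}(\Sigma)\Sigma$ directly from Gaussian moment bounds and apply the corresponding unbounded matrix Bernstein inequality. With that branch as the main argument the proof is sound.
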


\subsection{Initialization with covariance sketching}

We may also use a covariance sketching scheme to form an estimate of the covariance matrix to initialize the algorithm, as illustrated in Fig. \ref{cov_sketch}. Covariance sketching is based on sketches $\gamma_j$, $j = 1, \ldots, M$, of the samples $\tilde{x}_i$, $i = 1, \ldots, N$ drawn from the signal distribution. The sketches are formed by linearly projecting these samples via random sketching vectors $b_{i}$, $i = 1, \ldots, M$ and then computing the average energy over $L$ repetitions. The sketching can be shown to be a linear operator $\mathcal{B}$ applied on the original covariance matrix $\Sigma$, as demonstrated in Appendix \ref{app:cov_sketch}. 
Then we may recover the original covariance matrix from these sketches $\gamma$ by solving the following convex program 
\begin{equation}\label{opt}
\begin{array}{rl}
\widehat{\Sigma}= \argmin_{X} & {\rm tr}(X)\\
{\rm subject\ to}& X\ \succeq 0,\ \Vert \gamma-\mathcal B(X)\Vert_1\leq \tau,
\end{array}
\end{equation}
where $\tau$ is a user parameter that specifies the noise level.

\begin{figure}

\begin{center}
\includegraphics[width = 0.45\linewidth]{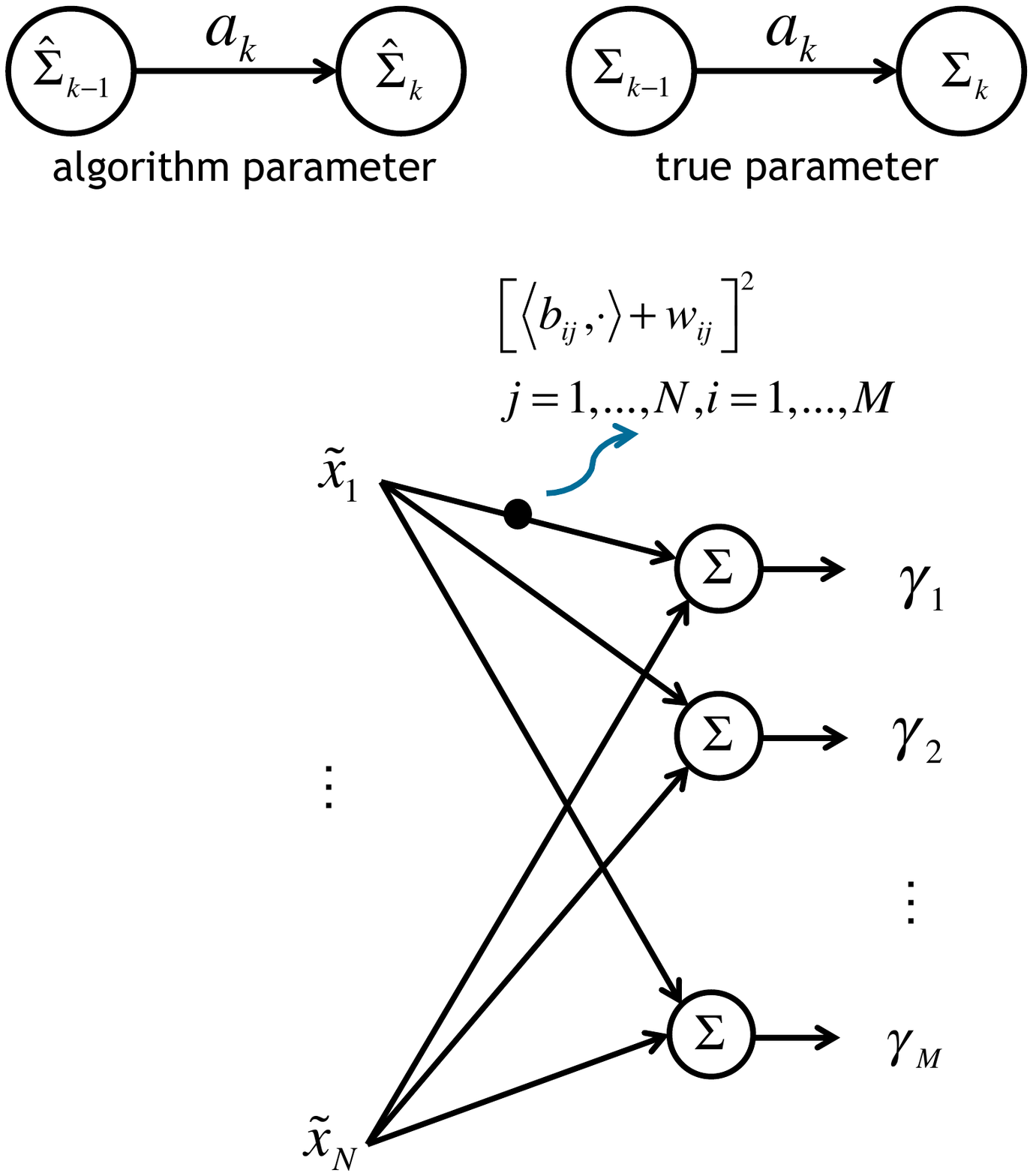}
\caption{Diagram of covariance sketching in our setting. The circle aggregates quadratic sketches from branches and computes the average. }
\label{cov_sketch}
\end{center}\vspace{-0.2in}
\end{figure}

We further establish conditions on the covariance sketching so that such an initialization for Info-Greedy Sensing is sufficient.

\begin{theorem}\label{thm:cov-sketch}
Assume 
the setup of covariance sketching as above. Then with probability exceeding
$1-2/n-{2}/{\sqrt{n}}-2n\exp(-\sqrt{n}))-\exp(-c_0c_1ns)$,  the solution to (\ref{opt}) satisfies 
\[\Vert \widehat{\Sigma}-\Sigma \Vert\leq \delta_0,\] for some $\delta_0 > 0$, as long as 
for some constant $c > 0$ the parameters $M$, $N$, $L$, and $\tau$ are chosen such that
\begin{align*}
M&\triangleq cns>c_0 ns,\\
N 
&\geq 4n^{1/2}{\rm tr}(\Sigma)(\frac{36c^2n^4s^2\Vert\Sigma\Vert}{\tau^2}+\frac{24cn^2s}{\tau}),\\
L
&\geq \max\{ \frac{cs}{4n\Vert\Sigma\Vert}\sigma^2, \ \frac{1}{\sqrt{2{\rm tr}{(\Sigma)}\Vert\Sigma\Vert csn^3}}\sigma^2, \frac{6cns}{\tau}\sigma^2\},\\
\tau& =cns\delta_0/C_2.
\end{align*}
Here $c_0$, $c_1$, $C_1$, and $C_2$ are absolute constants.

\end{theorem}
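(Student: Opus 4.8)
The plan is to split the recovery error into two contributions and control each separately: (i) the \emph{reconstruction error} incurred by the convex program (\ref{opt}) under an accurate linear-measurement model, and (ii) the \emph{sketch error} $\|\gamma-\mathcal B(\Sigma)\|_1$, measuring how far the empirical sketch $\gamma$ is from the noise-free linear image $\mathcal B(\Sigma)$ of the true covariance.

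First I would invoke a stable low-rank recovery guarantee for (\ref{opt}). Because $\Sigma\succeq 0$ has rank $s$ and $\mathcal B$ is built from $M$ random sketching vectors $b_1,\dots,b_M$ (see Appendix \ref{app:cov_sketch}), a restricted-isometry-type property for PSD matrices of rank at most $2s$ holds once $M=cns>c_0 ns$, with failure probability at most $\exp(-c_0c_1ns)$; under this event trace minimization with an $\ell_1$ data-fit constraint is stable, so any feasible $\widehat\Sigma$ of (\ref{opt}) obeys $\|\widehat\Sigma-\Sigma\|\le\|\widehat\Sigma-\Sigma\|_F\le C_1\|\gamma-\mathcal B(\Sigma)\|_1/M$, \emph{provided $\Sigma$ is itself feasible}, i.e. $\|\gamma-\mathcal B(\Sigma)\|_1\le\tau$. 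Plugging in $\tau=cns\delta_0/C_2=M\delta_0/C_2$ then gives $\|\widehat\Sigma-\Sigma\|\le C_1\delta_0/C_2\le\delta_0$ whenever the absolute constant $C_2$ is chosen with $C_2\ge C_1$.

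It remains to verify feasibility of $\Sigma$, i.e. $\|\gamma-\mathcal B(\Sigma)\|_1\le\tau$ with high probability. I would write $\gamma-\mathcal B(\Sigma)=(\gamma-\mathcal B(\widehat\Sigma_N))+\mathcal B(\widehat\Sigma_N-\Sigma)$, where $\widehat\Sigma_N=\frac1N\sum_{i=1}^N\tilde{x}_i\tilde{x}_i^{\intercal}$ is the empirical covariance of the $N$ training samples. For the second term, apply Corollary \ref{cor:sampleSize} with target accuracy $\delta_0'=\tau/(6cn^2s)$: this choice reproduces exactly the stated lower bound on $N$ and yields $\|\widehat\Sigma_N-\Sigma\|\le\delta_0'$ with probability at least $1-2n\exp(-\sqrt n)$; since each coordinate of $\mathcal B$ is a quadratic form in a sketching vector, $\|\mathcal B(\widehat\Sigma_N-\Sigma)\|_1\le M\max_i\|b_i\|^2\,\|\widehat\Sigma_N-\Sigma\|\le cn^2s\,\delta_0'=\tau/6$. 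For the first term, each coordinate of $\gamma$ is an average of $L$ i.i.d.\ noisy quadratic sketches with noise variance $\sigma^2$, so $\gamma-\mathcal B(\widehat\Sigma_N)$ is zero-mean with per-coordinate variance decaying like $1/L$; a Chebyshev/Bernstein estimate over the $M$ coordinates, using the three stated lower bounds on $L$, forces $\|\gamma-\mathcal B(\widehat\Sigma_N)\|_1\le\tau/2$ except on an event of probability at most $2/n+2/\sqrt n$. Adding the two pieces gives $\|\gamma-\mathcal B(\Sigma)\|_1\le\tau/2+\tau/6\le\tau$, so $\Sigma$ is feasible.

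A union bound over the three failure events — the RIP event ($\exp(-c_0c_1ns)$), the sample-covariance event ($2n\exp(-\sqrt n)$), and the $L$-averaging concentration event ($2/n+2/\sqrt n$) — delivers the success probability $1-2/n-2/\sqrt n-2n\exp(-\sqrt n)-\exp(-c_0c_1ns)$ claimed in the statement. I expect the main obstacle to be step (ii): correctly modeling the two-level sampling noise (over the $N$ samples and over the $L$ repetitions) through the operator $\mathcal B$ and tracking constants carefully, so that the three separate conditions on $M$, $N$, $L$ and the single tolerance $\tau$ all align with the precise form appearing in the theorem. By contrast, the stable-recovery step (i) is essentially an application of known quadratic-sampling/trace-minimization results, and the spectral-versus-Frobenius bound together with the final union bound are routine.
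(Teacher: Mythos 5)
Your proposal is correct and follows essentially the same route as the paper: the paper likewise combines the Chen--Chi--Goldsmith stable trace-minimization guarantee (its Lemma~\ref{sketching}) with a feasibility bound $\Vert\eta\Vert_1\leq\tau$ obtained by splitting the sketch noise into the empirical-covariance term $b_i^{\intercal}(\widehat{\Sigma}_N-\Sigma)b_i$ (controlled by the Wishart tail bound) and the $L$-averaged noise cross and square terms (controlled by Chebyshev), then takes a union bound and passes from Frobenius to spectral norm. The only differences are cosmetic: the paper splits $\tau$ as $\tau/3+\tau/3+\tau/3$ rather than $\tau/2+\tau/6$, and bounds $\sum_i\Vert b_i\Vert^2$ directly by Chebyshev instead of via $M\max_i\Vert b_i\Vert^2$.
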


\section{Numerical example}

When the assumed covariance matrix for the signal $x$ is equal to its true covariance matrix, Info-Greedy Sensing is identical to the batch method \cite{CarsonChenRodrigues2012} (the batch method measures using the largest eigenvectors of the signal covariance matrix). However, when there is a mismatch between the two, Info-Greedy Sensing outperforms the batch method due to its adaptivity, as shown by the example demonstrated in Fig. \ref{Fig:mismatch}. Info-Greedy Sensing also outperforms the sensing algorithm where $a_i$ are chosen to be random Gaussian vectors with the same power allocation, as it uses prior knowledge (albeit being imprecise) about the signal distribution.

\begin{figure}
\begin{center}
\includegraphics[width = 0.8\linewidth]{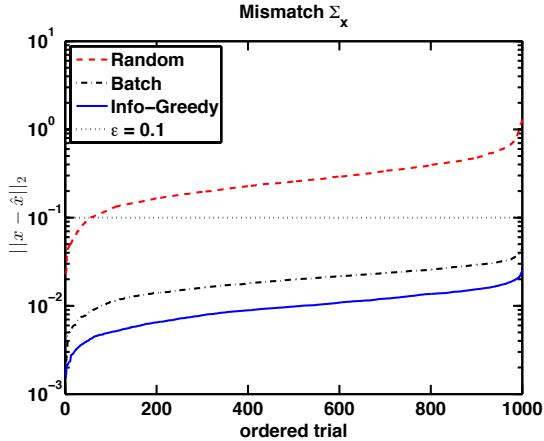}
\end{center}
\caption{Sensing a low-rank Gaussian signal of dimension $n= 500$ and about $5\%$ of the eigenvalues are non-zero, when there is mismatch between the assumed covariance matrix and true covariance matrix: ${\Sigma_x}_{\rm , assumed} = {\Sigma_x}_{\rm , true} + e \transpose{e}$, where $e\sim \mathcal{N}(0, I)$, and using 20 measurements. The batch method measures using the largest eigenvectors of ${\Sigma_x}_{\rm , assumed}$, and the Info-Greedy Sensing updates ${\Sigma_x}_{\rm , assumed}$ in the algorithm. Info-Greedy Sensing is more robust to mismatch than the batch method.}
\label{Fig:mismatch}
\end{figure}

\section{Discussion}

In high-dimensional problems, a commonly used low-dimensional signal model
for $x$ is to assume the signal lies in a
 subspace plus Gaussian noise, which corresponds to the case we considered in this paper where the signal covariance is low-rank. A more general model is the Gaussian mixture model (GMM), which can be viewed as a model for the signal lying in a
  union of multiple subspaces plus Gaussian noise, and it has been
  widely used in image and video analysis among
  others. 
  Our analysis for a low-rank Gaussian signal can be easily extended to an analysis of a low-rank Gaussian mixture model (GMM). Such results for GMM are quite general and can be used for an arbitrary signal distribution. 
In fact,
parameterizing via low-rank GMMs is a popular way to approximate
complex densities for high-dimensional data. Hence, we may be able to
couple the results for Info-Greedy Sensing of GMM with the
recently developed methods of scalable multi-scale density estimation
based on empirical Bayes \cite{WangCanale2014} to create powerful
tools for information guided sensing for a general signal model. We
may also be able to obtain performance guarantees using multiplicative weight update techniques
together with the error bounds in \cite{WangCanale2014}.

\section*{Acknowledgement}

This work is partially supported by an NSF CAREER Award CMMI-1452463
and an NSF grant CCF-1442635. Ruiyang Song was visiting the H. Milton Stewart School of Industrial and Systems Engineering at the Georgia Institute of Technology while working on this paper.

\bibliographystyle{ieeetr}
\bibliography{bib}

\appendices

\section{Covariance sketching}\label{app:cov_sketch}

We consider the following setup for covariance sketching. Suppose we are able to form measurement in the form of $y = \transpose a x + w$ like we have in the Info-Greedy Sensing algorithm. 
Suppose there are $N$ copies of Gaussian signal we would like to sketch: $\tilde{x}_1,\ldots, \tilde{x}_N$ that are i.i.d. sampled from $\mathcal{N}(0, \Sigma)$, and we sketch using $M$ random vectors: $b_1, \ldots, b_M$. Then for each fixed sketching vector $b_i$, and fixed copy of the signal $\tilde{x}_j$, we acquire $L$ noisy realizations of the projection result $y_{ijl}$ via
\[y_{ijl}=\transpose b_i \tilde{x}_j +w_{ijl}, \quad l = 1, \ldots, L.\]
We choose the random sampling vectors $b_i$ as i.i.d. Gaussian with zero mean and covariance matrix equal to an identity matrix. Then we average $y_{ijl}$ over all realizations $l = 1, \ldots, L$ to form the $i$th sketch $y_{ij}$ for a single copy $\tilde{x}_j$: 
\[y_{ij}=\transpose b_i\tilde{x}_j +\underbrace{\frac{1}{L}\sum_{l=1}^L w_{ijl}}_{w_{ij}}.\]
The average is introduced to suppress measurement noise, which can be viewed as a generalization of sketching using just one sample. 
Denote $w_{ij}=\frac{1}{L}\sum_{l=1}^L w_{ijl}$, which is distributed as $\mathcal N(0, \sigma^2/L)$. Then we will use average energy of the sketches as our data $\gamma_i$, $i = 1, \ldots, M$, for covariance recovery:
\[
\gamma_i \triangleq \frac{1}{N}\sum_{j=1}^{N}y_{ij}^2.
\]
Note that $\gamma_i$ can be further expanded as 
\begin{equation}
\gamma_i = {\rm tr}(\widehat{\Sigma}_N b_i \transpose b_i)+\frac{2}{N}\sum_{j=1}^N w_{ij}\transpose b_i \tilde{x}_j +\frac{1}{N}\sum_{j=1}^N w_{ij}^2, \label{meas}
\end{equation}
where \[\widehat{\Sigma}_N=\frac{1}{N}\sum_{j=1}^{N}\tilde{x}_j \tilde{x}^{\intercal}_j,\] is the maximum likelihood estimate of $\Sigma$ (and is also unbiased). We can write (\ref{meas}) in vector matrix notation as follows. 
Let
$\gamma=[\gamma_1,\cdots \gamma_M\transpose]$. Define a linear operator 
 $\mathcal B:\mathbb R^{n\times n}\mapsto \mathbb R^{M}$ such that  $\mathcal [B(X)]_i={\rm tr}(X b_i \transpose b_i)$. Thus, we can write (\ref{meas}) as a linear measurement of the true covariance matrix $\Sigma$  
 \[\gamma=\mathcal{B} (\Sigma)+\eta,\]
where $\eta \in \mathbb{R}^M$ contains all the error terms and corresponds to the noise in our covariance sketching measurements, with the $i$th entry given by $$\eta_i=\transpose b_i(\widehat{\Sigma}_N-\Sigma) b_i+\frac{2}{N}\sum_{j=1}^N w_{ij}\transpose b_i \tilde{x}_j +\frac{1}{N}\sum_{j=1}^N w_{ij}^2.$$
Note that we can further bound the $\ell_1$ norm of the error term as
\begin{align*}
\Vert\eta\Vert_1 =\sum_{i=1}^M\vert\eta_i\vert
\leq \Vert\widehat{\Sigma}_N-\Sigma\Vert b+ 2\sum_{i=1}^M\vert z_i\vert+w,
\end{align*}
where $$b=\sum_{i=1}^M \Vert b_i\Vert^2,\ \mathbb E[b]=Mn,\ {\rm Var}[b] =2Mn,$$
$$w=\frac{1}{N}\sum_{i=1}^M \sum_{j=1}^N w_{ij}^2,\ \mathbb E[w]=M\sigma^2/L,\ {\rm and}\ {\rm Var} [w]=\frac{2M\sigma^4}{NL^2},$$
$$z_i=\frac{1}{N}\sum_{j=1}^N w_{ij}\transpose b_i \tilde{x}_j,\ \mathbb E[z_i]=0\ {\rm and}\ {\rm Var} [z_i]=\frac{\sigma^2 {\rm tr}(\Sigma)} {NL}.$$

We may recover the true covariance matrix from the sketches $\gamma$ using the convex optimization problem (\ref{opt}).

\section{Backgrounds}

\begin{lemma}{\cite{stewart1990matrix}}\label{lemmaeig}
Let $\Sigma$, $\widehat{\Sigma}\in\mathbb{R}^{p\times p}$ be symmetric,with eigenvalues $\lambda_1\geq\cdots\geq \lambda_p$ and $\hat{\lambda}_1\geq\cdots\geq\hat{\lambda}_p $ respectively. $E=\widehat{\Sigma}-\Sigma$ has eigenvalues $e_1\geq\cdots\geq e_p$. Then for each $i\in\{1,\cdots,p\}$,
\[\hat{\lambda}_i\in[\lambda_i+e_p, \lambda_i+e_1].\]
\end{lemma}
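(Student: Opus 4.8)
The plan is to prove this via the Courant--Fischer (min--max) variational characterization of the eigenvalues of a symmetric matrix; this is essentially Weyl's perturbation inequality. Recall that for a symmetric matrix $A \in \mathbb{R}^{p \times p}$ with eigenvalues ordered $\mu_1 \geq \cdots \geq \mu_p$, one has
\[
\mu_i = \max_{\substack{V \subseteq \mathbb{R}^p \\ \dim V = i}} \ \min_{\substack{x \in V \\ \|x\| = 1}} \ \transpose{x} A x,
\qquad i = 1, \ldots, p.
\]

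First I would apply this formula to $A = \widehat{\Sigma} = \Sigma + E$, writing $\transpose{x}\widehat{\Sigma}x = \transpose{x}\Sigma x + \transpose{x}Ex$ for every unit vector $x$. The key elementary observation is that the Rayleigh quotient of $E$ is pinned between its extreme eigenvalues: $e_p \leq \transpose{x}Ex \leq e_1$ for all $x$ with $\|x\| = 1$. Substituting these two-sided bounds gives, for every unit $x$,
\[
\transpose{x}\Sigma x + e_p \ \leq \ \transpose{x}\widehat{\Sigma}x \ \leq \ \transpose{x}\Sigma x + e_1 .
\]

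Next I would push these inequalities through the min and then the max. Since the bound holds pointwise in $x$, for any fixed subspace $V$ with $\dim V = i$ we get $\min_{x \in V,\,\|x\|=1}\transpose{x}\widehat{\Sigma}x \geq \min_{x \in V,\,\|x\|=1}\transpose{x}\Sigma x + e_p$, and similarly with $e_1$ and $\leq$ for the upper bound; taking the maximum over all such $V$ and invoking Courant--Fischer on both $\widehat\Sigma$ and $\Sigma$ yields $\hat{\lambda}_i \geq \lambda_i + e_p$ and $\hat{\lambda}_i \leq \lambda_i + e_1$, which is exactly the claimed inclusion $\hat{\lambda}_i \in [\lambda_i + e_p,\ \lambda_i + e_1]$. (Alternatively, I could cite Stewart \cite{stewart1990matrix} directly, as the statement attributes it there.)

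There is no serious obstacle here — the argument is a few lines. The one point worth stating carefully is that the subspace $V$ attaining the max for $\Sigma$ is not the same as the one attaining it for $\widehat{\Sigma}$; this is precisely why one works with the min--max expression and the uniform (pointwise-in-$x$) bound on $\transpose{x}Ex$, rather than trying to compare eigenvectors directly. Monotonicity of $\min$ and $\max$ under pointwise inequalities does the rest.
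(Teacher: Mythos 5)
Your proof is correct: this is the standard derivation of Weyl's perturbation inequality from the Courant--Fischer min--max characterization, and each step (the pointwise bound $e_p \leq \transpose{x}Ex \leq e_1$ for unit $x$, then monotonicity of $\min$ and $\max$) is sound. The paper itself gives no proof of this lemma --- it is simply cited from Stewart and Sun --- so your argument supplies the standard textbook justification for the cited result and there is nothing to reconcile.
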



\begin{lemma}{\cite{ChenChiGoldsmith2014}}
\label{sketching}
Denote $\mathcal A:\mathbb R^{n\times n}\mapsto \mathbb R^{m}$ a linear operator and for $X\in \mathbb R^{n\times n}$, $\mathcal A(X)=\{a_i^T X a_i\}_{i=1}^{m}$. Suppose the measurement is contaminated by noise $\eta\in R^m$, i.e. $Y=\mathcal A(\Sigma)+\eta$ and assume $\Vert \eta \Vert_1\leq \epsilon_1$. Then with probability exceeding $1-\exp (-c_1 m)$ the solution $\widehat{\Sigma}$ to the trace minimization (\ref{opt}) satisfies
\[\Vert \widehat{\Sigma}-\Sigma \Vert_F\leq C_1\frac{\Vert\Sigma-\Sigma_r\Vert_{*}}{\sqrt{r}}+C_2\frac{\epsilon_1}{m},\]
for all $\Sigma \in R^{n\times n}$, provided that $m> c_0nr $. $c_0$, $c_1$, $C_1$ and $C_2$ are absolute constants and $\Sigma_r$ represents the best rank-r approximation of $\Sigma$.
When $\Sigma_r$ is exactly rank-$r$
$$\Vert \widehat{\Sigma}-\Sigma \Vert_F\leq C_2\frac{\epsilon_1}{m}.$$
\end{lemma}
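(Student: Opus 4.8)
\textbf{Proof proposal for Lemma \ref{sketching}.} The plan is to prove Lemma \ref{sketching} as a stable low-rank recovery guarantee for the rank-one quadratic sampling operator $\mathcal{A}(X)=\{\transpose{a_i}Xa_i\}_{i=1}^m$, built on a \emph{restricted isometry property with respect to the $\ell_2/\ell_1$ mixed norm}. Concretely, I would first show that when the $a_i$ are i.i.d.\ standard Gaussian and $m>c_0 nr$, then with probability at least $1-\exp(-c_1 m)$ there are absolute constants $0<\kappa_{\min}\le\kappa_{\max}$ with
\[
\kappa_{\min}\Vert X\Vert_F \;\le\; \frac{1}{m}\Vert\mathcal{A}(X)\Vert_1 \;\le\; \kappa_{\max}\Vert X\Vert_F
\]
holding simultaneously for all symmetric $X$ of rank at most $2r$. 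The use of the $\ell_1$ measurement norm is essential: rank-one ensembles do not satisfy the usual $\ell_2$ RIP, and the lower isometry survives only in $\ell_1$.

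First I would establish the pointwise two-sided behavior. Writing $\transpose aXa=\sum_i\sigma_i g_i^2$ in the eigenbasis of $X$ with $g_i$ i.i.d.\ standard normal, the upper bound follows from $\mathbb{E}|\transpose aXa|\le(\mathbb{E}(\transpose aXa)^2)^{1/2}=(\Tr(X)^2+2\Vert X\Vert_F^2)^{1/2}$, which on the rank-$\le 2r$ set is $O(\Vert X\Vert_F)$ after bounding $|\Tr(X)|\le\sqrt{2r}\,\Vert X\Vert_F$ and absorbing the $\sqrt r$ factor into the cone constants. The matching lower bound $\mathbb{E}|\transpose aXa|\ge c\Vert X\Vert_F$, valid for all symmetric $X$, I would obtain from a Paley--Zygmund / Gaussian hypercontractivity argument on the degree-two polynomial $\transpose aXa$, using $\Vert\transpose aXa\Vert_{L^4}\lesssim\Vert\transpose aXa\Vert_{L^2}$ to transfer control from the second moment down to $L^1$. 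I would then upgrade these expectations to a uniform statement over the rank-$\le 2r$ set by a covering-net argument: such a set admits an $\varepsilon$-net of cardinality $\exp(Cnr)$, Bernstein's inequality for the sub-exponential variables $|\transpose{a_i}Xa_i|$ concentrates $\tfrac{1}{m}\Vert\mathcal{A}(X)\Vert_1$ around its mean at each net point with failure probability $\exp(-cm)$, and a union bound closes under $m>c_0 nr$.

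Next I would run the standard descent-cone argument for trace minimization, exploiting that on the PSD cone the trace equals the nuclear norm. Set $H=\widehat{\Sigma}-\Sigma$. With the constraint radius in (\ref{opt}) taken to be $\epsilon_1$, the truth $\Sigma$ is feasible since $\Vert\eta\Vert_1\le\epsilon_1$, so feasibility of both $\widehat{\Sigma}$ and $\Sigma$ gives $\Vert\mathcal{A}(H)\Vert_1\le 2\epsilon_1$ by the triangle inequality. Optimality $\Tr(\widehat{\Sigma})\le\Tr(\Sigma)$ together with $\widehat{\Sigma}\succeq 0$ yields the cone condition $\Vert H_c\Vert_*\le\Vert H_0\Vert_*+2\Vert\Sigma-\Sigma_r\Vert_*$, where $H_0$ is the projection of $H$ onto the tangent space of the top-$r$ eigenspace of $\Sigma$ (rank $\le 2r$) and $H_c$ the complementary tail. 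Splitting $H_c=\sum_{j\ge1}H_{c,j}$ into rank-$r$ blocks ordered by decreasing singular value, so that $\sum_{j\ge2}\Vert H_{c,j}\Vert_F\le\Vert H_c\Vert_*/\sqrt r$, the triangle inequality plus the block-wise mixed-norm RIP give $\tfrac{1}{m}\Vert\mathcal{A}(H)\Vert_1\ge\kappa_{\min}\Vert H\Vert_F-C\,\Vert\Sigma-\Sigma_r\Vert_*/\sqrt r$. Combining with $\Vert\mathcal{A}(H)\Vert_1\le2\epsilon_1$ and rearranging produces $\Vert H\Vert_F\le C_1\Vert\Sigma-\Sigma_r\Vert_*/\sqrt r+C_2\,\epsilon_1/m$, and the exact-rank case $\Sigma=\Sigma_r$ annihilates the first term.

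The main obstacle is the uniform lower isometry $\tfrac{1}{m}\Vert\mathcal{A}(X)\Vert_1\gtrsim\Vert X\Vert_F$. Unlike dense Gaussian matrix sensing, the rank-one ensemble produces heavy-tailed (sub-exponential, not sub-Gaussian) measurements and fails the ordinary $\ell_2$ RIP, so the lower bound must be obtained in the $\ell_1$ norm, and the Paley--Zygmund step controlling $\mathbb{E}|\transpose aXa|$ from below must be carried out carefully for \emph{indefinite} symmetric $X$ (the error $H$ is a difference of PSD matrices) and then made uniform over the low-rank set through the net/concentration bookkeeping. Once this mixed-norm RIP is secured, the recovery bound follows from the by-now-routine cone analysis above.
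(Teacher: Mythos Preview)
The paper does not prove Lemma \ref{sketching} at all; it is quoted verbatim from \cite{ChenChiGoldsmith2014} in the ``Backgrounds'' appendix and invoked as a black box in the proof of Theorem \ref{thm:cov-sketch}. So there is no in-paper argument to compare against --- your sketch is effectively a reconstruction of the cited reference's proof.

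On its own merits, your outline is the correct framework and matches what \cite{ChenChiGoldsmith2014} does: an $\ell_2/\ell_1$ restricted isometry for the rank-one Gaussian ensemble (with the lower bound via a Paley--Zygmund / small-ball argument, uniformized by a net and sub-exponential concentration), followed by the standard nuclear-norm descent-cone analysis. You also correctly flag the central difficulty, namely that rank-one quadratic measurements are only sub-exponential and fail the usual $\ell_2$-RIP, forcing the analysis into $\ell_1$.

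One technical point needs tightening. The two-sided bound $\kappa_{\min}\|X\|_F \le \tfrac1m\|\mathcal A(X)\|_1 \le \kappa_{\max}\|X\|_F$ with \emph{absolute} $\kappa_{\max}$ cannot hold on the rank-$\le 2r$ set: already for $X=I_r$ one has $\mathbb E|a^\intercal X a|=r$ while $\|X\|_F=\sqrt r$, so any Frobenius-norm upper isometry constant must grow like $\sqrt r$. Your remark about ``absorbing the $\sqrt r$ factor into the cone constants'' would then make $C_1,C_2$ depend on $r$, contrary to the statement. The cited reference avoids this by not relying on a symmetric two-sided Frobenius RIP: the recovery argument uses only the absolute-constant \emph{lower} isometry on low-rank matrices, paired with an upper control in the \emph{nuclear} norm, $\tfrac1m\|\mathcal A(X)\|_1 \le C\|X\|_*$ (which holds with an absolute constant since $\mathbb E|a^\intercal X a|\le \|X\|_*$), and the block decomposition is arranged so that the $\sqrt r$ factors cancel. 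With that adjustment your plan goes through; as written, the upper-isometry step is the one place that would not close.
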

\begin{lemma}{\cite{zhu2012short}}\label{zhu2012}
\label{Tail}
If $X\in \mathbb R^{n\times n}\sim \mathcal W_n(N,\Sigma)$, then for $t>0$,
$$P\{\Vert\frac{1}{N}X-\Sigma\Vert\geq (\sqrt{\frac{2t(\theta+1)}{N}}+\frac{2t\theta}{N})\Vert\Sigma \Vert\}\leq 2n\exp(-t),$$
where $\theta=\rm tr(\Sigma)/\Vert \Sigma \Vert.$
\end{lemma}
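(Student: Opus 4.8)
The plan is to establish this spectral-norm concentration bound for the sample covariance matrix by the matrix Laplace transform (matrix Chernoff/Bernstein) method. First I would realize the Wishart matrix as $X=\sum_{i=1}^N x_i\transpose{x_i}$ with $x_i\sim\mathcal N(0,\Sigma)$ i.i.d., so that $\frac1N X-\Sigma=\sum_{i=1}^N Z_i$ where $Z_i=\frac1N(x_i\transpose{x_i}-\Sigma)$ are i.i.d., symmetric, and mean-zero. Since $\|\frac1N X-\Sigma\|=\max\{\lambda_{\max}(\sum_i Z_i),\lambda_{\max}(-\sum_i Z_i)\}$, it suffices to bound the upper tail of $\lambda_{\max}(\sum_i Z_i)$ and then apply the identical argument to $-Z_i$; the two contributions combine through a union bound to produce the prefactor $2n$ in the final probability.

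The second step computes the matrix variance and identifies the two scales in the bound. Using Isserlis' (Wick's) formula for the fourth Gaussian moments one gets $\mathbb{E}[(x\transpose{x})^2]=\mathbb{E}[\|x\|^2 x\transpose{x}]=2\Sigma^2+\mathrm{tr}(\Sigma)\Sigma$, hence $\mathbb{E}[(x\transpose{x}-\Sigma)^2]=\Sigma^2+\mathrm{tr}(\Sigma)\Sigma$, so that the matrix variance is $\nu:=\|\sum_i\mathbb{E}[Z_i^2]\|=\frac1N\|\Sigma^2+\mathrm{tr}(\Sigma)\Sigma\|=\frac{\|\Sigma\|^2(\theta+1)}{N}$, which is exactly the quantity appearing under the square root of the claimed bound. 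In parallel I would identify the sub-exponential scale $R:=\mathrm{tr}(\Sigma)/N=\theta\|\Sigma\|/N$, reflecting that $\|x_i\transpose{x_i}\|=\|x_i\|^2$ concentrates around $\mathrm{tr}(\Sigma)$; this $R$ will generate the linear $2t\theta/N$ term.

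The third and central step controls the matrix cumulant generating function of a single summand: for $0<\vartheta<1/R$ I would show the Bernstein-type bound
$$\log\mathbb{E}\!\left[e^{\vartheta Z_i}\right]\preceq \frac{\vartheta^2/2}{1-R\vartheta}\,A_i^2,\qquad \sum_i A_i^2=\tfrac1N(\Sigma^2+\mathrm{tr}(\Sigma)\Sigma).$$
Because $x_i\transpose{x_i}$ is rank one, $e^{(\vartheta/N)x_i\transpose{x_i}}=I+\frac{e^{\vartheta\|x_i\|^2/N}-1}{\|x_i\|^2}x_i\transpose{x_i}$ in closed form, and I would take the Gaussian expectation of this rank-one correction (equivalently bound the moments $\mathbb{E}[Z_i^p]\preceq\frac{p!}{2}R^{p-2}A_i^2$) to obtain the stated cgf bound. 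Feeding this into Lieb's subadditivity / trace-exponential master inequality gives
$$P\{\lambda_{\max}(\tfrac1N X-\Sigma)\geq s\}\leq n\inf_{0<\vartheta<1/R}\exp\!\left(-\vartheta s+\tfrac{\vartheta^2\nu/2}{1-R\vartheta}\right)\leq n\exp\!\left(\frac{-s^2/2}{\nu+Rs}\right).$$
I would then invert the tail: setting $s=\sqrt{2t\nu}+2tR$ and using $\sqrt{t^2R^2+2t\nu}\leq tR+\sqrt{2t\nu}$ shows $\frac{s^2/2}{\nu+Rs}\geq t$, so the upper-tail probability is at most $ne^{-t}$. Substituting $\nu=\|\Sigma\|^2(\theta+1)/N$ and $R=\theta\|\Sigma\|/N$ yields $s=(\sqrt{2t(\theta+1)/N}+2t\theta/N)\|\Sigma\|$, and doubling for the lower tail gives $2ne^{-t}$, which is exactly the asserted statement.

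The main obstacle I anticipate is the third step: $x_i\transpose{x_i}$ is unbounded in spectral norm, so the bounded matrix Bernstein inequality cannot be invoked directly, and the cgf/moment bound must be established by hand from the rank-one structure together with the sub-exponential tail of the quadratic form $\|x_i\|^2$ (a $\chi^2$-type variable), while keeping the scale parameter $R$ sharp enough to reproduce the precise constant $2$ in the $2t\theta/N$ term. The remaining steps—the Gaussian fourth-moment computation, the Laplace-transform bookkeeping, and the final optimization over $\vartheta$—are routine.
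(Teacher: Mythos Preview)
The paper does not prove this lemma at all: it is stated in the ``Backgrounds'' appendix as a cited result from \cite{zhu2012short}, with no argument supplied. So there is no in-paper proof to compare against.

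On its own merits your plan is the standard and correct one. Writing $\frac1N X-\Sigma$ as a sum of i.i.d.\ centered rank-one matrices, computing the matrix variance via Isserlis to get $\nu=\|\Sigma\|^2(\theta+1)/N$, identifying the sub-exponential scale $R=\theta\|\Sigma\|/N$ from the $\chi^2$-type tail of $\|x_i\|^2$, and then applying a matrix Bernstein inequality for sub-exponential summands (via Lieb's theorem and optimization of the Laplace parameter) is exactly how such bounds are obtained, and it reproduces the two terms $\sqrt{2t(\theta+1)/N}$ and $2t\theta/N$ with the dimensional factor $n$ and the doubling to $2n$ from the two tails. Your anticipated obstacle in step~3 is real but routine: the moment bound $\mathbb{E}[Z_i^p]\preceq \tfrac{p!}{2}R^{p-2}A_i^2$ follows from the rank-one identity $(x\transpose{x})^p=\|x\|^{2(p-1)}x\transpose{x}$ together with Gaussian moment estimates for $\|x\|^{2(p-1)}x\transpose{x}$, and this is precisely what the cited reference establishes. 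In short, your proposal is a faithful reconstruction of the argument behind the cited lemma, not a departure from anything in the present paper.
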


\section{Proofs}

\begin{lemma}\label{propdelta}

Suppose the power of measurement in the $k$th step is $\beta_{k}$.
If $\delta_{k-1}\leq{3\sigma^2}/{4\beta_{k}}$, $\delta_{k}\leq 4\delta_{k-1}$.

\end{lemma}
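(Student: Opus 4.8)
The plan is to analyze a single update step and show directly that $\delta_k=\|E_k\|\le 4\|E_{k-1}\|=4\delta_{k-1}$. Abbreviate $u:=\hat u_k$, the top unit eigenvector of $\widehat\Sigma_{k-1}$ with eigenvalue $\hat\lambda_k$, and $\beta:=\beta_k$, so $a_k=\sqrt\beta\,u$. Since $u$ is an eigenvector of $\widehat\Sigma_{k-1}$, the rule \eqref{eq:covariance-eigenvector} gives $\widehat\Sigma_k=\widehat\Sigma_{k-1}-\frac{\beta\hat\lambda_k^2}{\beta\hat\lambda_k+\sigma^2}\,u\transpose{u}$, while for the true covariance the Bayesian posterior-covariance update (the left-hand side of \eqref{eq:covariance-eigenvector} with $a=a_k$, which is generally not an eigenvector of $\Sigma_{k-1}$) gives $\Sigma_k=\Sigma_{k-1}-\frac{\beta(\Sigma_{k-1}u)\transpose{(\Sigma_{k-1}u)}}{\beta\transpose{u}\Sigma_{k-1}u+\sigma^2}$. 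Writing $v:=E_{k-1}u$, so that $\Sigma_{k-1}u=\hat\lambda_k u-v$, I subtract these to get $E_k=E_{k-1}+\Delta$ with an explicit correction matrix $\Delta$ of rank at most $2$.

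The crux is to bound $\|\Delta\|$ sharply. I would split $v=\rho u+v_\perp$ with $\rho=\transpose{u}v$ and $v_\perp\perp u$, so that $\rho^2+\|v_\perp\|^2=\|v\|^2\le\|E_{k-1}\|^2=\delta_{k-1}^2$. With $\nu:=\transpose{u}\Sigma_{k-1}u=\hat\lambda_k-\rho$, $B:=\beta\nu+\sigma^2$ and $g(\lambda):=\beta\lambda^2/(\beta\lambda+\sigma^2)$, a short rearrangement expresses $\Delta$ as a sum of three mutually Frobenius-orthogonal matrices, namely $(g(\nu)-g(\hat\lambda_k))\,u\transpose{u}$, $-\frac{\beta\nu}{B}\bigl(u\transpose{v_\perp}+v_\perp\transpose{u}\bigr)$, and $\frac{\beta}{B}\,v_\perp\transpose{v_\perp}$; hence $\|\Delta\|^2\le\|\Delta\|_F^2=(g(\nu)-g(\hat\lambda_k))^2+2\bigl(\tfrac{\beta\nu\|v_\perp\|}{B}\bigr)^2+\bigl(\tfrac{\beta\|v_\perp\|^2}{B}\bigr)^2$. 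For the three terms I would use: $0\le g'(\lambda)=\beta\lambda(\beta\lambda+2\sigma^2)/(\beta\lambda+\sigma^2)^2<1$, so $g$ is nonexpansive on $[0,\infty)$ and the first term is at most $|\nu-\hat\lambda_k|^2=\rho^2$; $\beta\nu/B<1$, so the second is at most $2\|v_\perp\|^2$; and $B\ge\sigma^2$ combined with the hypothesis $\beta\delta_{k-1}\le 3\sigma^2/4$ (hence $\beta\|v_\perp\|\le 3\sigma^2/4$) makes the third at most $\tfrac{9}{16}\|v_\perp\|^2$. Summing, $\|\Delta\|_F^2\le\rho^2+\tfrac{41}{16}\|v_\perp\|^2\le\tfrac{41}{16}\delta_{k-1}^2$, so $\|\Delta\|\le\tfrac{\sqrt{41}}{4}\delta_{k-1}$, and the triangle inequality gives $\delta_k\le\|E_{k-1}\|+\|\Delta\|\le\bigl(1+\tfrac{\sqrt{41}}{4}\bigr)\delta_{k-1}<4\delta_{k-1}$.

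The main obstacle is keeping the constant below $4$. A naive split of $\Delta$ into rank-one pieces before introducing $v_\perp$ leaves two separate $O(\delta_{k-1})$ contributions, from the $u\transpose{u}$ part and from the $u\transpose{v}+v\transpose{u}$ part, whose crude bounds already sum to roughly $3\delta_{k-1}$, and adding the genuinely quadratic piece then pushes $\delta_k/\delta_{k-1}$ past $4$. The resolution is that these two contributions largely cancel: after full expansion the coefficient of $u\transpose{u}$ collapses to $g(\nu)-g(\hat\lambda_k)$, and nonexpansiveness of $g$ means this part costs only $|\rho|\le\delta_{k-1}$. The hypothesis $\delta_{k-1}\le 3\sigma^2/(4\beta_k)$ enters exactly once, to tame the lone quadratic term $\beta\|v_\perp\|^2/B$, which would otherwise be uncontrolled relative to $\delta_{k-1}$ once $\delta_{k-1}\gg\sigma^2/\beta_k$. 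I would also record at the start that $\widehat\Sigma_{k-1}$ and $\Sigma_{k-1}$ stay positive semidefinite along the iterations, which is what guarantees $\hat\lambda_k\ge0$ and $\nu\ge0$, hence $B\ge\sigma^2$ and the nonexpansiveness of $g$ on the range that is actually used. Iterating this factor-$4$ bound is what yields the $4^{K+1}$ and $4^{s+1}$ thresholds in Theorems~\ref{entropy} and~\ref{thm:power}.
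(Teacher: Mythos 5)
Your proof is correct, but it follows a genuinely different route from the paper's. The paper substitutes $\Sigma_{k-1}=\widehat\Sigma_{k-1}-E_{k-1}$ into the two posterior-covariance updates, expands $E_k-E_{k-1}$ into terms involving $\widehat A_k E_{k-1}$, $E_{k-1}\widehat A_k$ and $E_{k-1}\widehat A_k E_{k-1}$, and bounds each by submultiplicativity of the spectral norm; this yields $\delta_k\le\delta_{k-1}+\frac{3\beta_k\hat\lambda_k+\beta_k\delta_{k-1}}{\beta_k\hat\lambda_k+\sigma^2-\beta_k\delta_{k-1}}\,\delta_{k-1}$, and the hypothesis $\beta_k\delta_{k-1}\le 3\sigma^2/4$ is used to show the combined fraction is at most $3$, landing exactly on the factor $4$. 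You instead split the rank-$\le 2$ correction $\Delta$ into three mutually Frobenius-orthogonal pieces aligned with $u$ and $v_\perp$, cancel the two would-be $O(\delta_{k-1})$ contributions in the $uu^{\intercal}$ coefficient via nonexpansiveness of $g(\lambda)=\beta\lambda^2/(\beta\lambda+\sigma^2)$ (indeed $g'(\lambda)=1-\sigma^4/(\beta\lambda+\sigma^2)^2\in[0,1)$), and invoke the hypothesis only for the quadratic $v_\perp v_\perp^{\intercal}$ piece. This buys a strictly sharper constant, $\delta_k\le\bigl(1+\tfrac{\sqrt{41}}{4}\bigr)\delta_{k-1}\approx 2.6\,\delta_{k-1}$, at the price of a slightly more delicate decomposition; the paper's cruder norm bounds are tight against the stated factor $4$ but require no structural cancellation. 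One small caveat: your framing of the ``obstacle'' slightly overstates the failure of the naive route --- the paper's combined-fraction trick does keep the constant at exactly $4$ rather than pushing past it --- but this is commentary and does not affect the validity of your argument. Your observation that positive semidefiniteness of $\widehat\Sigma_{k-1}$ and $\Sigma_{k-1}$ is needed (so that $\hat\lambda_k,\nu\ge 0$, $B\ge\sigma^2$, and $g$ is applied on $[0,\infty)$) is a point the paper's proof uses implicitly as well.
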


\begin{proof}
Let $\widehat{A}_{k}={a}_{k}{a}^{\intercal}_{k}$, and $\Vert \widehat{A}_{k} \Vert=\beta_{k}$,
\begin{align*}
E_{k}
&=E_{k-1}+\frac{\Sigma_{k-1} {a}_{k} {a}_{k}^{\intercal}\Sigma_{k-1}}{{a}_{k-1}^{\intercal}\Sigma_{k-1} {a}_{k}+\sigma^2}-\frac{\hat{\lambda}_{k}{a}_{k}{a}_{k}^{\intercal}\hat{\Sigma}_{k-1}}{\beta_{k}\hat{\lambda}_{k}+\sigma^2}\\
\delta_{k}
&\leq \delta_{k-1} +\frac{\beta_{k}\hat{\lambda}_{k}{a}_{k} E_{k-1}{a}_{k}}{(\beta_{k}\hat{\lambda}_{k}+\sigma^2)
(\beta_{k}\hat{{\lambda}}_{k}+\sigma^2-{a}_{k}^{\intercal} E_{k-1}{a})}\cdot \Vert \widehat{A}_{k}\widehat{\Sigma}_{k-1}\Vert\\
&~~+\frac{1}{\beta_{k}\hat{{\lambda}}_{k}+\sigma^2-{a}_{k}^{\intercal} E_{k-1}{a}_{k}}\\
&~~\cdot[\hat{\lambda}_{k}(\Vert \widehat{A}_{k}E_{k-1}\Vert]
+\Vert E_{k-1}\widehat{A}_{k}\Vert)
+\Vert E_{k-1}\widehat{A}_{k}E_{k-1}\Vert]\\
&\leq \delta_{k-1} +\frac{\beta_{k}^2\hat{\lambda}_{k}^2\delta_{k-1}}{(\beta_{k}\hat{\lambda}_{k}+\sigma^2)(\beta_{k}\hat{{\lambda}}_{k}+\sigma^2-\beta_{k}\delta_{k-1})}\\
&~~+\frac{\beta_{k}}{\beta_{k}\hat{\lambda}_{k}+\sigma^2-\beta_{k}\delta_{k-1}}[2\hat{\lambda}_{k}\delta_{k-1}+\delta_{k-1}^2]\\
&\leq (1+3\frac{\beta_{k}\hat{\lambda}_{k}}{ \beta_{k}\hat{\lambda}_{k}+\sigma^2-\beta_{k}\delta_{k-1} })\delta_{k-1}\\
&~~+\frac{\beta_{k}}{\beta_{k}\hat{\lambda}_{k}+\sigma^2-\beta_{k}\delta_{k-1}}\delta_{k-1}^2.
\end{align*}
Now that $\delta_{k-1} \leq \frac{3\sigma^2}{4\beta_{k}}$, we have 
$\delta_{k} \leq 4\delta_{k-1}$. 

\end{proof}
\begin{lemma}\label{proprank}
Consider positive semi-definite matrix $X \in \mathbb R^{n\times n}$, for $h\in \mathbb R^n$, if 
\[Y=X-\frac{1}{h^{\intercal}X h+\sigma^2}Xhh^{\intercal}X,\] we have \[{\rm rank }(X)={\rm rank}(Y).\]
\end{lemma}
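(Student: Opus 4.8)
The plan is to show that $X$ and $Y$ have the same null space, which immediately forces $\mathrm{rank}(X)=\mathrm{rank}(Y)$. Throughout I use that $X$ is symmetric and that the noise variance $\sigma^2>0$, so $h^{\intercal}Xh+\sigma^2>0$ and the update is well defined. One inclusion is trivial: if $Xv=0$, then $h^{\intercal}Xv=h^{\intercal}(Xv)=0$, so
\[
Yv=Xv-\frac{h^{\intercal}Xv}{h^{\intercal}Xh+\sigma^2}\,Xh=0,
\]
hence $\ker X\subseteq\ker Y$.

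For the reverse inclusion, suppose $Yv=0$ and set $c\triangleq h^{\intercal}Xv/(h^{\intercal}Xh+\sigma^2)$, a scalar, so that $Xv=c\,Xh$. Left-multiplying this identity by $h^{\intercal}$ gives $h^{\intercal}Xv=c\,h^{\intercal}Xh$; on the other hand, the definition of $c$ gives $h^{\intercal}Xv=c\,(h^{\intercal}Xh+\sigma^2)$. Subtracting the two yields $c\,\sigma^2=0$, and since $\sigma^2>0$ we conclude $c=0$, whence $Xv=0$. Therefore $\ker Y\subseteq\ker X$, so $\ker X=\ker Y$ and $\mathrm{rank}(X)=n-\dim\ker X=n-\dim\ker Y=\mathrm{rank}(Y)$.

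The only delicate point is the step forcing $c=0$: it uses crucially that the Tikhonov-type term $\sigma^2$ is strictly positive. Without it (a noiseless, arbitrarily-high-power measurement along $h$) the rank would genuinely drop by one, so some positivity hypothesis on $\sigma^2$ is unavoidable; I would make this explicit in the write-up, since the lemma is applied in the algorithm exactly in the $\sigma^2>0$ regime.

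As an alternative I would record the slicker factorization argument: with $g\triangleq X^{1/2}h$ (so $\|g\|^2=h^{\intercal}Xh$) one checks directly that $Y=X^{1/2}MX^{1/2}$ where $M\triangleq I-gg^{\intercal}/(\|g\|^2+\sigma^2)$. The eigenvalues of $M$ are $1$ (with multiplicity $n-1$) and $\sigma^2/(\|g\|^2+\sigma^2)>0$, so $M$ is positive definite; then, writing $Y=(X^{1/2}M^{1/2})(X^{1/2}M^{1/2})^{\intercal}$ and using $\mathrm{rank}(AA^{\intercal})=\mathrm{rank}(A)$ together with the invertibility of $M^{1/2}$, we get $\mathrm{rank}(Y)=\mathrm{rank}(X^{1/2}M^{1/2})=\mathrm{rank}(X^{1/2})=\mathrm{rank}(X)$. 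I expect to present the null-space argument as the main proof since it avoids matrix square roots, and mention the factorization as a remark.
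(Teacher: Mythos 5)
Your proof is correct, and it takes a genuinely different route from the paper's. The paper also reduces the problem to showing $\ker X=\ker Y$, but for the nontrivial inclusion it factors the positive semi-definite matrix as $X=\transpose{Q}Q$, writes $0=\transpose{x}Yx=\transpose{z}z-\transpose{z}b\transpose{b}z/(\transpose{b}b+\sigma^2)$ with $z=Qx$, $b=Qh$, and invokes Cauchy--Schwarz to get $\transpose{z}z\leq \frac{\transpose{b}b}{\transpose{b}b+\sigma^2}\transpose{z}z$, forcing $z=0$ and hence $Xx=0$. Your argument instead observes that $Yv=0$ means $Xv=c\,Xh$ for the explicit scalar $c=\transpose{h}Xv/(\transpose{h}Xh+\sigma^2)$, and then derives $c\sigma^2=0$ by comparing the two expressions for $\transpose{h}Xv$; this is more elementary (no factorization, no quadratic forms, no Cauchy--Schwarz) and strictly more general, since it never uses positive semi-definiteness of $X$ --- only that the denominator is nonzero and $\sigma^2>0$ --- whereas the paper's quadratic-form step relies on $X\succeq 0$ in an essential way. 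Your observation that $\sigma^2>0$ is the load-bearing hypothesis is exactly right and is implicitly what makes the paper's strict inequality $\transpose{b}b/(\transpose{b}b+\sigma^2)<1$ work as well. Your alternative factorization $Y=X^{1/2}MX^{1/2}$ with $M=I-g\transpose{g}/(\|g\|^2+\sigma^2)\succ 0$ is also valid and is closest in spirit to the paper's use of a square-root decomposition, though the paper does not exploit the positive definiteness of a middle factor as cleanly as you do.
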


\begin{proof}

Apparently, $\forall x\in {\rm ker}(X)$, $Yx=0$, i.e. \[{\ker}(X)\subset{ \ker }(Y).\]
Apply a decomposition for the positive semi-definite matrix $X=Q^{\intercal }Q$.
For $\forall x\in \ker (Y)$, let $b=Qh$, $z=Qx$. 
If $b=0$, $Y=X$; otherwise, when $b\neq 0$, 
we have \[0=x^{\intercal}Yx=z^{\intercal}z-\frac{z^{\intercal}bb^{\intercal}z}{b^{\intercal}b +\sigma^2}.\]
Thus,
\[ z^{\intercal} z=\frac{z^{\intercal}bb^{\intercal}z}{b^{\intercal}b +\sigma^2}\leq \frac{b^{\intercal} b}{ b^{\intercal} b+\sigma^2} z^{\intercal} z.\]
Therefore $z=0$, i.e. $x\in \ker (X)$,
$\ker (Y)\subset \ker (X).$
This shows that $\ker(X)=\ker(Y)$,
which leads to
${\rm rank}(X)={\rm rank} (Y).$
\end{proof}

\begin{lemma}\label{proptrace}
If $\delta_{k-1}\leq \hat{\lambda}_{k}$, the true conditional covariance matrix $\Sigma_k$ of the signal $x$ conditioned upon the  measurements $y_1, \ldots, y_k$ is related to the previous iteration as follows:
\begin{align*}
{\rm tr}(\Sigma_k)\leq {\rm tr}(\Sigma_{k-1})
&-\frac{\beta_{k}\hat{\lambda}_{k}^2}{\beta_{k}\hat{\lambda}_{k}+\sigma^2}+\frac{3\beta_{k}\hat{\lambda}_{k}\delta_{k-1}}{\beta_{k}\hat{\lambda}_{k}+\sigma^2-\beta_{k}\delta_{k-1}}.
\end{align*}
\end{lemma}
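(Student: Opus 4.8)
The plan is to track how the trace of the \emph{true} conditional covariance $\Sigma_k$ changes when the $k$-th measurement direction $a_k = \sqrt{\beta_k}\,\hat u_k$ is chosen from the \emph{assumed} matrix $\widehat\Sigma_{k-1}$ rather than from $\Sigma_{k-1}$. The exact Bayesian update for the true posterior covariance under the linear-Gaussian measurement $y_k = \transpose a_k x + w_k$ is
\[
\Sigma_k = \Sigma_{k-1} - \frac{\Sigma_{k-1} a_k \transpose a_k \Sigma_{k-1}}{\transpose a_k \Sigma_{k-1} a_k + \sigma^2},
\]
so that
\[
{\rm tr}(\Sigma_{k-1}) - {\rm tr}(\Sigma_k) = \frac{\transpose a_k \Sigma_{k-1}^2 a_k}{\transpose a_k \Sigma_{k-1} a_k + \sigma^2}.
\]
First I would substitute $\Sigma_{k-1} = \widehat\Sigma_{k-1} - E_{k-1}$ into this expression and use that $\hat u_k$ is the top eigenvector of $\widehat\Sigma_{k-1}$ with eigenvalue $\hat\lambda_k$, hence $\widehat\Sigma_{k-1}\,\hat u_k = \hat\lambda_k \hat u_k$ and $\transpose a_k \widehat\Sigma_{k-1} a_k = \beta_k \hat\lambda_k$. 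The idea is that the ``ideal'' decrease is $\beta_k\hat\lambda_k^2/(\beta_k\hat\lambda_k+\sigma^2)$ — the quantity already recorded for ${\rm tr}(\widehat\Sigma_k)$ in the text — and the perturbation $E_{k-1}$ contributes correction terms controlled by $\delta_{k-1} = \|E_{k-1}\|$.

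The key steps, in order: (i) expand the numerator $\transpose a_k \Sigma_{k-1}^2 a_k = \beta_k\,\transpose{\hat u_k}(\widehat\Sigma_{k-1}-E_{k-1})^2 \hat u_k = \beta_k(\hat\lambda_k^2 - 2\hat\lambda_k \transpose{\hat u_k}E_{k-1}\hat u_k + \transpose{\hat u_k}E_{k-1}^2\hat u_k)$ and bound the two error terms in absolute value by $2\hat\lambda_k\delta_{k-1}$ and $\delta_{k-1}^2 \le \hat\lambda_k\delta_{k-1}$ (using the hypothesis $\delta_{k-1}\le\hat\lambda_k$); (ii) for the denominator, write $\transpose a_k \Sigma_{k-1} a_k = \beta_k\hat\lambda_k - \beta_k \transpose{\hat u_k}E_{k-1}\hat u_k \ge \beta_k\hat\lambda_k - \beta_k\delta_{k-1}$, so $\transpose a_k\Sigma_{k-1}a_k + \sigma^2 \ge \beta_k\hat\lambda_k + \sigma^2 - \beta_k\delta_{k-1} > 0$; (iii) combine these into a lower bound on the trace decrease of the form $\frac{\beta_k\hat\lambda_k^2}{\beta_k\hat\lambda_k+\sigma^2} - (\text{something})\cdot\frac{\beta_k\hat\lambda_k\delta_{k-1}}{\beta_k\hat\lambda_k+\sigma^2-\beta_k\delta_{k-1}}$, then verify the constant works out to $3$. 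The arithmetic in step (iii) is where care is needed: one must split the ratio $\frac{\text{numerator}}{\text{denom}}$, peel off the main term $\frac{\beta_k\hat\lambda_k^2}{\beta_k\hat\lambda_k+\sigma^2}$ (which requires comparing a fraction with denominator $\beta_k\hat\lambda_k+\sigma^2-\beta_k\delta_{k-1}$ against one with denominator $\beta_k\hat\lambda_k+\sigma^2$), and absorb the cross term from that comparison into the $\delta_{k-1}$ error budget so that all leftover terms collapse under the single coefficient $3$.

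I expect the main obstacle to be the bookkeeping in step (iii): controlling the difference between $\frac{\hat\lambda_k^2}{\beta_k\hat\lambda_k+\sigma^2-\beta_k\delta_{k-1}}$ and $\frac{\hat\lambda_k^2}{\beta_k\hat\lambda_k+\sigma^2}$ cleanly and showing that, together with the $2\hat\lambda_k\delta_{k-1}+\delta_{k-1}^2 \le 3\hat\lambda_k\delta_{k-1}$ bound from the numerator, everything fits under $\frac{3\beta_k\hat\lambda_k\delta_{k-1}}{\beta_k\hat\lambda_k+\sigma^2-\beta_k\delta_{k-1}}$. A subtlety to watch is the sign of $\transpose{\hat u_k}E_{k-1}\hat u_k$, which can be either positive or negative, so all bounds must be stated via $|\transpose{\hat u_k}E_{k-1}\hat u_k| \le \delta_{k-1}$ and hold in the worst case; the hypothesis $\delta_{k-1}\le\hat\lambda_k$ is exactly what keeps the denominator positive and lets $\delta_{k-1}^2$ be reabsorbed into a first-order-in-$\delta_{k-1}$ term. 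Everything else is a direct substitution using the eigenvector property and the $2\times2$-type estimates already used in the proof of Lemma \ref{propdelta}.
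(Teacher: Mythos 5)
Your proposal is correct and follows essentially the same route as the paper: the paper forms the matrix recursion for $E_k=\widehat{\Sigma}_k-\Sigma_k$ by subtracting the two rank-one updates and then takes traces, which is exactly your scalar identity $\mathrm{tr}(\Sigma_{k-1})-\mathrm{tr}(\Sigma_k)=\transpose{a_k}\Sigma_{k-1}^2a_k/(\transpose{a_k}\Sigma_{k-1}a_k+\sigma^2)$ expanded via $\Sigma_{k-1}=\widehat{\Sigma}_{k-1}-E_{k-1}$ and the eigenvector relation, with the same three error terms bounded by $\delta_{k-1}$ and the same use of $\delta_{k-1}\le\hat{\lambda}_k$ to keep the denominator positive. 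Your step (iii) does close (the exact cross term works out to a factor $(\beta_k\hat{\lambda}_k+2\sigma^2)/(\beta_k\hat{\lambda}_k+\sigma^2)\le 2$, comfortably under the coefficient $3$), so the plan is sound as written.
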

\begin{proof}
Let $\widehat{A}_{k}={a}_{k}{a}^{\intercal}_{k}$.
\begin{align*}
E_k&=E_{k-1} +\hat{\lambda}_{k}^2\widehat{A}_{k}\\
&\cdot\frac{{a}_{k}^{\intercal}E_{k-1}{a}_{k}}{(\beta_{k}\hat{\lambda}_{k}+\sigma^2)(\beta_{k}\hat{\lambda}_{k}+\sigma^2-{a}_{k}^{\intercal}E_{k-1}{a}_{k})}\\
&-\frac{\hat{\lambda}_{k}}{\beta_{k}\hat{\lambda}_{k}+\sigma^2-{a}_{k}^{\intercal}E_{k-1}{a}_{k}}
\cdot(\widehat{A}_{k}E_{k-1}+E_{k-1}\widehat{A}_{k})\\
&+\frac{1}{\beta_{k}\hat{\lambda}_{k}+\sigma^2-{a}_{k}^{\intercal}E_{k-1}{a}_{k}}E_{k-1}\widehat{A}_{k}E_{k-1}.
\end{align*}
Note that ${\rm rank}(\widehat{A}_{k})=1$, thus ${\rm rank}(\widehat{A}_{k}E_{k-1})\leq 1$, therefore it has at most one nonzero eigenvalue,
\begin{align*}
\vert{\rm tr}(\widehat{A}_{k}E_{k-1})\vert
&=\vert{\rm tr}({E_{k-1}\widehat{A}_{k}})\vert \\
&=\Vert\widehat{A}_{k}E_{k-1}\Vert
\leq\Vert\widehat{A}_{k}\Vert\Vert E_{k-1}\Vert =\beta_{k}\delta_{k-1}.
\end{align*}
Note that $E_{k-1}$ is symmetric and $\hat{A}_{k}$ is positive semi-definite, we have
$
{\rm tr}(E_{k-1}\widehat{A}_{k}E_{k-1})\geq 0.
$
Hence, 
\begin{align*}
\rm{tr}(E_k)&=\rm{tr}(\widehat{\Sigma}_{k})-\rm{tr}(\Sigma_k)\\
&\geq {\rm tr}(E_{k-1})-\frac{3\beta_{k}\hat{\lambda}_{k}(\beta_{k}\hat{\lambda}_{k}+\frac{2\sigma^2}{3})\delta_{k-1}}{(\beta_{k}\hat{\lambda}_{k}+\sigma^2)(\beta_{k}\hat{\lambda}_{k}+\sigma^2-\beta_{k}\delta_{k-1})}.
\end{align*}
Therefore,
\begin{align*}
{\rm tr}(\Sigma_k)\leq {\rm tr}(\Sigma_{k-1})
&-\frac{\beta_{k}\hat{\lambda}_{k}^2}{\beta_{k}\hat{\lambda}_{k}+\sigma^2}+\frac{3\beta_{k}\hat{\lambda}_{k}\delta_{k-1}}{\beta_{k}\hat{\lambda}_{k}+\sigma^2-\beta_{k}\delta_{k-1}}.
\end{align*}
\end{proof}
\begin{lemma}\label{est}
Denote $\theta={\rm tr}(\Sigma)/\Vert\Sigma \Vert\geq 1$, ${\rm rank}(\Sigma)=s$, $M=cns$,
if \[N \geq 4n^{1/2}{\rm tr}(\Sigma)(\frac{36c^2n^4s^2\Vert\Sigma\Vert}{\tau^2}+\frac{24cn^2s}{\tau})\]
and
\[L\geq \max\{ \frac{cs}{4n\Vert\Sigma\Vert}\sigma^2, \ \frac{1}{\sqrt{2{\rm tr}{(\Sigma)}\Vert\Sigma\Vert csn^3}}\sigma^2, \frac{6cns}{\tau}\sigma^2\},\]
then with probability exceeding $1-{2}/{n}-{2}/{\sqrt{n}}-2n\exp(-\sqrt{n})$
we have $\Vert\eta\Vert_1\leq \tau.$
\end{lemma}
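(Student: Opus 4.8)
The plan is to work from the error decomposition for the covariance-sketching noise derived in Appendix \ref{app:cov_sketch}, namely
\[
\|\eta\|_1 \;\le\; \|\widehat{\Sigma}_N-\Sigma\|\,b \;+\; 2\sum_{i=1}^M |z_i| \;+\; w ,
\]
together with the moments recorded there: $b=\sum_{i=1}^M\|b_i\|^2$ with $\mathbb{E}[b]=Mn$ and ${\rm Var}[b]=2Mn$; $w=\tfrac1N\sum_{i,j}w_{ij}^2$ with $\mathbb{E}[w]=M\sigma^2/L$ and ${\rm Var}[w]=2M\sigma^4/(NL^2)$; and $z_i$ with $\mathbb{E}[z_i]=0$ and ${\rm Var}[z_i]=\sigma^2{\rm tr}(\Sigma)/(NL)$. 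Recalling that $M=cns$, I would split the budget as $\tau=\tfrac{\tau}{3}+\tfrac{\tau}{3}+\tfrac{\tau}{3}$ and show that, on a high-probability event, each of the three summands on the right is at most $\tau/3$; a union bound over the few failure events then yields $\|\eta\|_1\le\tau$.

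For the first term I would set $\delta':=\tau/(6cn^2s)=\tau/(6Mn)$. Substituting $\delta'$ shows that the hypothesis
\[
N\ \ge\ 4n^{1/2}{\rm tr}(\Sigma)\Big(\tfrac{36c^2n^4s^2\|\Sigma\|}{\tau^2}+\tfrac{24cn^2s}{\tau}\Big)
\]
is precisely $N\ge 4n^{1/2}{\rm tr}(\Sigma)\big(\|\Sigma\|/(\delta')^2+4/\delta'\big)$, so Corollary \ref{cor:sampleSize} --- equivalently Lemma \ref{Tail} with $t=\sqrt n$, since $N\widehat{\Sigma}_N\sim\mathcal W_n(N,\Sigma)$ and $\theta={\rm tr}(\Sigma)/\|\Sigma\|\ge1$ --- gives $\|\widehat{\Sigma}_N-\Sigma\|\le\delta'$ with probability at least $1-2n\exp(-\sqrt n)$. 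Separately, Chebyshev's inequality applied to $b$ (which concentrates around $\mathbb{E}[b]=Mn$, with ${\rm Var}[b]=2Mn$) gives $b\le 2Mn$ with probability at least $1-2/(Mn)$. On the intersection of these two events, $\|\widehat{\Sigma}_N-\Sigma\|\,b\le \delta'\cdot 2Mn=\tau/3$.

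For the remaining two terms, the constraint $L\ge(6cns/\tau)\sigma^2=6M\sigma^2/\tau$ forces $\mathbb{E}[w]=M\sigma^2/L\le\tau/6$, after which a Chebyshev (or chi-squared concentration) bound on $w$, using that ${\rm Var}[w]=2M\sigma^4/(NL^2)$ is small thanks to the lower bounds on $N$ and $L$, gives $w\le\tau/3$ off an event whose probability is absorbed into the $2/n$ term. For the cross term, $\mathbb{E}[|z_i|]\le\sqrt{{\rm Var}[z_i]}=\sigma\sqrt{{\rm tr}(\Sigma)/(NL)}$, so by linearity $\mathbb{E}\big[\sum_i|z_i|\big]\le M\sigma\sqrt{{\rm tr}(\Sigma)/(NL)}$, and Markov's inequality bounds $P\big(2\sum_i|z_i|\ge\tau/3\big)$ by $6M\sigma\sqrt{{\rm tr}(\Sigma)/(NL)}/\tau$; this is at most $2/\sqrt n$ because the lower bound on $N$ together with $L\ge\tfrac{cs}{4n\|\Sigma\|}\sigma^2$ forces $NL\ge 9M^2n\sigma^2{\rm tr}(\Sigma)/\tau^2$. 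Taking a union bound over the (at most four) failure events and folding the $2/(Mn)$ from the bound on $b$ into $2/n$ gives $\|\eta\|_1\le\tau$ with probability at least $1-2/n-2/\sqrt n-2n\exp(-\sqrt n)$.

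The main obstacle will be the constant bookkeeping: one has to verify that the stated numerical lower bounds on $N$ and $L$ (with their factors $4,\,36,\,24,\,6,\,\tfrac14,\dots$) simultaneously push all three pieces below $\tau/3$ and keep the combined failure probability at the advertised level. A secondary technical point is the cross term, where the randomness of the sketch vectors $b_i$, the samples $\tilde x_j$, and the noise $w_{ijl}$ is coupled; I would handle this by conditioning on $\{b_i,\tilde x_j\}$ and using the tower rule to obtain ${\rm Var}[z_i]=\sigma^2{\rm tr}(\Sigma)/(NL)$ exactly as in Appendix \ref{app:cov_sketch}, so that only second moments --- not independence of the $z_i$ across $i$ --- enter the Markov step.
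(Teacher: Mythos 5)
Your proposal is correct and follows essentially the same route as the paper's proof: the same decomposition $\Vert\eta\Vert_1\le\Vert\widehat{\Sigma}_N-\Sigma\Vert\,b+2\sum_i|z_i|+w$, the same three-way split of $\tau$, the Wishart tail bound (Lemma~\ref{Tail} with $t=\sqrt{n}$) for the sample-covariance deviation, and Chebyshev for $b$ and $w$. The only (harmless) deviations are cosmetic: you bound $2\sum_i|z_i|$ by Markov on the sum rather than per-term Chebyshev plus a union bound, and you allocate the failure probabilities slightly differently, but both choices are covered by the stated hypotheses on $N$ and $L$ and land within the advertised $1-2/n-2/\sqrt{n}-2n\exp(-\sqrt{n})$.
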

\begin{proof}
From Chebyshev's inequality, we have that
\[P\{\vert z_i\vert<\frac{\tau}{6M}\} \geq 1-\frac{36M^2\sigma^2{\rm tr}(\Sigma)}{NL\tau^2},\] 
\[P\{w<M\frac{\sigma^2}{L}+\frac{\tau}{6}\} \geq 1-\frac{72\sigma^4M}{NL^2\tau^2},\] and 
\[P\{b<(M+\sqrt{M})n\} \geq 1-\frac{2}{n}.\] 
Let $\delta_{\Sigma}={\tau}/[{3n(M+\sqrt{M})}].$ When 
\begin{align*}
N 
& \geq 4n^{1/2}{\rm tr}(\Sigma)(\frac{36n^2M^2\Vert\Sigma\Vert}{\tau^2}+\frac{24nM}{\tau})
\end{align*}
with with Lemma \ref{zhu2012}, we have 
\begin{align*}
&P\{\Vert\widehat{\Sigma}_N-\Sigma\Vert \leq \delta_{\Sigma}\}\\
\geq & P\{\Vert\widehat{\Sigma}_N-\Sigma\Vert \leq (\sqrt{\frac{2n^{1/2}(\theta+1)}{N}}+\frac{2\theta n^{1/2}}{N})\Vert\Sigma\Vert\}\\
> & 1-2n\exp(-\sqrt{n}),
\end{align*}
when 
\[L\geq \max\{ \frac{cs}{4n\Vert\Sigma\Vert}\sigma^2, \ \frac{1}{\sqrt{2{\rm tr}{(\Sigma)}\Vert\Sigma\Vert csn^3}}\sigma^2, \frac{6cns}{\tau}\sigma^2\},\]
we have 
\begin{align*}
P\{\vert z_i\vert<\frac{\tau}{6M}\}
&\geq 1-\frac{1}{M\sqrt{n}},\\
P\{w<\frac{\tau}{3}\}
&\geq 1-\frac{1}{\sqrt{n}},\\
P\{\vert b\vert<(M+\sqrt{M})n\}
&\geq 1-\frac{2}{n}.
\end{align*}
Therefore, $\Vert\eta\Vert_1\leq \tau$
holds with probability at least $1-{2}/{n}-{2}/{\sqrt{n}}-2n\exp(-\sqrt{n})$.
\end{proof}
\begin{proof}[Proof of Theorem \ref{entropy}]
Recall that for $k=1,\ldots, K$, $\hat{\lambda}_k\geq \varepsilon^2/\chi_n^2(p)$. With Lemma \ref{propdelta} provided in the Appendix, we can show that for some constant $0<\zeta<1$, if $\delta_0\leq \zeta\varepsilon^2/(4^{K+1}\chi_n^2(p))$, for the first $K$ measurements,
\[\delta_k\leq\frac{1}{4^{K-k}}\frac{\zeta\varepsilon^2}{4\chi_n^2(p)}\leq\frac{1}{4^{K-k}}\frac{3\sigma^2}{4\beta_1},\ k=1,\ldots, K.\]
By applying the result in Lemma \ref{proptrace}, we have
\begin{align*}
{\rm tr}(\Sigma_k)
&\leq {\rm tr}{(\Sigma_{k-1})}-(1-\zeta)\frac{\beta_{k}\hat{\lambda}_{k}}{\beta_{k}\hat{\lambda}_{k}+\sigma^2}\lambda_{k}\\
&\leq {\rm tr}(\Sigma_{k-1})-(1-\zeta)\frac{\beta_{k}\hat{\lambda}_k}{\beta_{k}\hat{\lambda}_{k}+\sigma^2}\frac{{\rm tr}(\Sigma_{k-1})}{s}.
\end{align*}
Recall that 
\[
f_k=1-\frac{1-\zeta}{s}\frac{\beta_k\hat{\lambda}_k}{\beta_k\hat{\lambda}_{k}+\sigma^2},
\]
we have
\[
{\rm tr}(\Sigma_k) \leq  f_{k} {\rm tr}(\Sigma_{k-1}).
\]
Subsequently, 
\[{\rm tr}(\Sigma_k)\leq (\prod_{j=1}^{k}f_j) {\rm tr}(\Sigma_0).\]
Lemma \ref{proprank} shows that the rank of the covariance will not be changed by updating the covariance matrix sequentially:
${\rm rank} (\Sigma_1)=\cdots={\rm rank} (\Sigma_k)=s$. Hence, we may decompose the covariance matrix $\Sigma_k=Q Q^{\intercal}$, with $Q\in \mathbb R^{n\times s}$ being a full-rank matrix, then
${\textsf {Vol}}(\Sigma_k)={\rm det}(Q^{\intercal} Q).$
Since ${\rm tr}(Q^{\intercal} Q)={\rm tr}(Q Q^{\intercal})$, we have
\begin{align*}
&{\textsf {Vol}}^2(\Sigma_k)={\rm det}{(Q^{\intercal} Q)}
\overset{(1)}{\leq} \prod_{j=1}^{s}(Q^{\intercal} Q)_{jj}\\
&\overset{(2)}{ \leq} (\frac{{\rm tr}( Q^{\intercal} Q)}{s})^s
=(\frac{{\rm tr(\Sigma_k)}}{s})^s,
\end{align*}
where (1) follows from the Hadamard's inequality and (2) follows from the mean inequality.
Finally, we can bound the conditional entropy of the signal as 
\begin{align*}
\entropy[y_{j}, a_{j}, j \leq k]{x} 
&= \ln (2\pi e)^{s/2} {\textsf{Vol}}(\Sigma_k) \\
&\leq \frac{s}{2}\ln \{2\pi e (\prod_{j=1}^{k}f_j) {\rm tr}(\Sigma_0)\}.
\end{align*}
\end{proof}

\begin{proof}[Proof of Corollary \ref{cor:sampleSize}]
Let $\theta={\rm tr}(\Sigma)/\Vert\Sigma\Vert\geq 1$. We have that for some constant $\delta_0 > 0$, when
\[L\geq 4n^{1/2}{\rm tr}(\Sigma)(\frac{\Vert\Sigma\Vert}{\delta_0^2}+\frac{4}{\delta_0}),\]
%
with Lemma \ref{zhu2012}, we have
\begin{align*}
& P\{\Vert\widehat{\Sigma}-\Sigma\Vert \leq \delta_0\}\\
& \geq  P\{\Vert\widehat{\Sigma}-\Sigma\Vert \leq (\sqrt{\frac{2n^{1/2}(\theta+1)}{L}}+\frac{2\theta n^{1/2}}{L})\Vert\Sigma\Vert\}\\
&> 1-2n\exp(-\sqrt{n}).
\end{align*}
\end{proof}
\begin{proof}[Proof of Theorem \ref{thm:power}]
Recall that ${\rm rank}(\Sigma)=s$, $\lambda_{s+1}(\Sigma)=\cdots=\lambda_n(\Sigma)=0$.
Notice that for each step of iteration, the the eigenvalue of $\widehat{\Sigma}_k$ in the direction of $a_k$, which corresponds to the largest eigenvalue of $\widehat{\Sigma}_k$, is eliminated below threshold. Therefore, as long as the sequential algorithm continues, the largest eigenvalue of $\widehat{\Sigma}_k$ is exactly the $(k+1)$th largest eigenvalue of $\widehat{\Sigma}$.
Now that $\delta_0\leq \frac{1}{4^{s+1}}\frac{\varepsilon^2}{\chi_n^2(p)}$ with Lemma \ref{lemmaeig} and Lemma \ref{propdelta},
$$\vert \hat{\lambda}_{k}-\lambda_k(\Sigma) \vert\leq \delta_0,\ {\rm for}\ k=1,\ldots,s,$$
$$\vert \hat{\lambda}_{j} \vert\leq \delta_0\leq \frac{\varepsilon^2}{\chi_n^2(p)}-\delta_{s},\ {\rm for}\ k=s+1,\ldots,n.$$
Notice that in the ideal case with no perturbation, the aim of each measurement is to decrease the eigenvalue of a particular direction to $\varepsilon^2/{\chi_n^2(p)}$.
Suppose in the ideal scenario, the algorithm stops after $K\leq s$ steps of iteration.
Hence, $$\lambda_1(\Sigma)\geq\cdots\geq \lambda_K(\Sigma)>\frac{\varepsilon^2}{\chi_n^2(p)},$$
$$\lambda_{s}(\Sigma)\leq\cdots\leq\lambda_{K+1}(\Sigma)\leq\frac{\varepsilon^2}{\chi_n^2(p)}.$$
Therefore, the power needed in the ideal case is
$$P_{\rm ideal}=\sum_{k=1}^{K}(\frac{\chi^2_n(p)}{\varepsilon^2}-\frac{1}{\lambda_k(\Sigma)})\sigma^2.$$
In the noisy case, for the first $K$ steps of measurements, $1\leq k\leq K$, we choose the power $$\beta_k=\sigma^2(\frac{1}{\frac{\varepsilon^2}{\chi_n^2(p)}-\delta_s}-\frac{1}{\hat{\lambda}_{k}}).$$
We have
$$\frac{\sigma^2}{\beta_{k-1}\hat{\lambda}_{k-1}+\sigma^2}\hat{\lambda}_{k-1}= \frac{\varepsilon^2}{\chi_n^2(p)}-\delta_s.$$
For the steps $K+1\leq k\leq s$,
\begin{align*}
\beta_k
&=\max\{0,\sigma^2(\frac{1}{\frac{\varepsilon^2}{\chi_n^2(p)}-\delta_s}-\frac{1}{\hat{\lambda}_k})\}\\
& \leq \sigma^2(\frac{1}{\frac{\varepsilon^2}{\chi_n^2(p)}-\delta_s}-\frac{1}{\frac{\varepsilon^2}{\chi_n^2(p)}+\delta_0})\\
&\leq \sigma^2 \frac{(4^s+1)\delta_0}{(\frac{\varepsilon^2}{\chi_n^2(p)}+\delta_0)(\frac{\varepsilon^2}{\chi_n^2(p)}-4^s \delta_0)} \leq \frac{20}{51}\frac{\chi_n^2(p)}{\varepsilon^2}\sigma^2.
\end{align*}
With Lemma \ref{lemmaeig}, all eigenvalues of $\Sigma_K$ are no greater than $$\frac{\varepsilon^2}{\chi_n^2(p)}-\delta_s+\lambda_1{(E_s)}=\frac{\varepsilon^2}{\chi_n^2(p)}.$$
And the total power
\begin{align*}
P_{\rm mismatch}
&= \sum_{k=1}^{s}\beta_k\\
&\leq\sigma^2\{\sum_{k=1}^{K}(\frac{1}{\frac{\varepsilon^2}{\chi_n^2(p)}-\delta_s}-\frac{1}{\hat{\lambda}_{k}})+\frac{20(s-K)}{51}\frac{\chi_n^2(p)}{\varepsilon^2}\}.
\end{align*}
In order to achieve precision $\varepsilon$ and confidence level $p$,
the extra power needed is upper bounded as
\begin{align*}
&P_{\rm mismatch}-P_{\rm ideal}\\
&\leq \sigma^2\{\sum_{k=1}^{K} (\frac{1}{3}\frac{\chi_n^2(p)}{\varepsilon^2}+\delta_0\frac{1}{\lambda_k^2}) +\frac{20(s-K)}{51}\frac{\chi_n^2(p)}{\varepsilon^2}\}\\
&\leq\sigma^2\{\frac{1}{4^{s+1}}\frac{\varepsilon^2}{\chi_n^2(p)}\sum_{k=1}^{K}\frac{1}{\lambda_k^2}+\frac{20s-3K}{51}\frac{\chi_n^2(p)}{\varepsilon^2}\}\\
&<(\frac{20}{51}s-(\frac{1}{17}-\frac{1}{4^{s+1}})K)\frac{\chi_n^2(p)}{\varepsilon^2}\sigma^2\\
&\leq (\frac{20}{51}s+\frac{1}{272}K)\frac{\chi_n^2(p)}{\varepsilon^2}\sigma^2.
\end{align*}
\end{proof}

\begin{proof}[Proof of Theorem \ref{thm:cov-sketch}]
Let $\theta={\rm tr}(\Sigma)/\Vert\Sigma\Vert\geq 1.$ With Lemma \ref{est}, let $\tau={M\delta_0}/{C_2}$,
the choice of $M$,$N$, and $L$ ensures that $\Vert\eta\Vert_1\leq{M\delta_0}/{C_2}$ with probability at least $1-{2}/{n}-{2}/{\sqrt{n}}-2n\exp(-\sqrt{n}))$.
By applying Lemma \ref{sketching} and noting that the rank of $\Sigma$ is exactly $s$, we have
$$\Vert \widehat{\Sigma}-\Sigma \Vert_F\leq \delta_0.$$
Therefore, with probability exceeding $1-2/n-{2}/{\sqrt{n}}-2n\exp(-\sqrt{n}))-\exp(-c_0c_1ns),$
$$\Vert \widehat{\Sigma}-\Sigma \Vert\leq \Vert \widehat{\Sigma}-\Sigma \Vert_F\leq \delta_0.$$
\end{proof}

\end{document}